\documentclass[12pt]{article}

\usepackage[margin=1.2in]{geometry}
\usepackage{amsmath,amsthm,amssymb,mathtools}
\usepackage{natbib}
\usepackage{hyperref}
\usepackage{setspace}
\usepackage{booktabs}
\usepackage{graphicx}
\usepackage{xcolor}
\usepackage{caption}
\usepackage{subcaption}
\newtheorem{proposition}{Proposition}
\newtheorem{lemma}{Lemma}

\newtheorem{assumption}{Assumption}
\newtheorem{definition}{Definition}
\newtheorem{remark}{Remark}

\newcommand{\E}{\mathbb{E}}
\newcommand{\R}{\mathbb{R}}

\newcommand{\hpi}{\hat{\pi}}

\newcommand{\bA}{\bar{A}}
\newcommand{\bK}{\bar{K}}
\newcommand{\bB}{\bar{B}}
\newcommand{\diff}{\,\mathrm{d}}

\begin{document}

\title{\textbf{Cognitive Debt: AI as Intellectual Leverage \\ and the Dynamics of Systemic Fragility}}

\author{Shuchen Meng\\
\small New York University\\
\small \texttt{sm13533@nyu.edu}}

\date{June 2026 \\ \small \textit{Preliminary --- Comments Welcome}}

\maketitle

\begin{abstract}
\noindent
We develop a formal theory of \emph{cognitive debt}: the stock of unverified
reasoning obligations that accumulates when individuals use AI as a
substitute---rather than a complement---for first-principles cognition.
The model features two state variables per agent---\emph{cognitive capital}
(the unaided ability to reason, verify, and transfer knowledge) and
\emph{cognitive debt} (the corresponding unserviced obligation)---and
a multiplicative production technology in which cognitive capital functions
as collateral that determines the return to AI adoption.
We establish six propositions. (1) Rational agents incur positive cognitive
debt because the costs are deferred, partially external, and masked by short-run
productivity gains. (2) Tranquil periods lower subjective risk assessments, raise
AI substitution intensity, and compound leverage---generating a \emph{cognitive
Minsky moment}: subjective risk falls while true systemic fragility rises.
(3) Expected crisis losses are convex in the aggregate leverage ratio, so high-leverage
economies are disproportionately fragile. (4) Post-crisis, output-target pressure
produces a \emph{false-correction loop}: agents rationally patch AI failures with
more AI, ratcheting leverage upward and increasing the severity of future crises under competitive output pressure. (5) The
decentralised equilibrium over-adopts substitutive AI relative to the social
optimum, owing to three unpriced externalities: systemic risk, cognitive public
goods, and an arms-race externality. The optimal Pigouvian instrument is an
AI-use tax indexed to the aggregate leverage ratio. (6) In a
two-type heterogeneous-agent economy, high-cognitive-capital agents adopt AI
more intensively, erode their capital faster, and---in a reversal of fortune---eventually
hold \emph{less} unaided cognitive capital than initially lower-skilled agents;
heterogeneity also amplifies aggregate systemic risk via the convexity of crisis losses.
\smallskip

\noindent\textit{Keywords:} cognitive debt; AI adoption; intellectual leverage;
Minsky fragility; human capital; systemic risk; externalities.

\noindent\textit{JEL Codes:} O33, E44, J24, D62, G01.
\end{abstract}

\newpage

\section{Introduction}
\label{sec:intro}

\paragraph{A productivity paradox.}
Consider two findings from the same experimental programme.
\citet{NoyZhang2023} randomly assign mid-career professionals access to a large
language model and find that treated workers complete writing tasks 37\% faster,
at quality rated higher by independent evaluators.
\citet{DellAcquaEtAl2023}, in a field experiment with Boston Consulting Group
analysts, replicate the productivity gain for routine in-distribution tasks---and
then ask workers to solve problems beyond the AI's training frontier.
For these \emph{out-of-distribution} tasks, the high-AI-use group
performs \emph{worse} than the control group, consistent with overreliance
and miscalibration at the AI capability frontier. We interpret this as a
short-run analogue of the longer-run cognitive-capital mechanism developed below.

The pattern is not an anomaly. It is the signature of a structural mechanism that
this paper formalises: short-run productivity and long-run cognitive capacity are
in tension, and agents who rationally maximise the former systematically underinvest
in the latter. The result is a hidden stock of obligations---the difference between
what agents can produce \emph{with} AI and what they can produce \emph{without} it.
We call this stock \emph{cognitive debt}.

\paragraph{This paper.}
We develop a dynamic equilibrium model of AI adoption with aggregate cognitive
externalities, addressing cognitive debt accumulation, systemic fragility, and welfare. The framework has two distinguishing features.
First, cognitive capital functions as the \emph{collateral} underlying AI adoption:
the marginal product of AI is strictly proportional to unaided cognitive capacity,
so capital erosion feeds back into rising AI dependence in a self-reinforcing cycle.
Second, AI failure is a \emph{common-mode} event: when AI systems share training
data and users share workflows, errors are correlated, and model collapse
\citep{ShumailovEtAl2024} degrades AI quality in proportion to aggregate reliance.
These two features together generate Minsky-style instability in the cognitive domain.

Our central contribution is to show that three individually benign features---which
each appear either desirable or at worst neutral---combine to produce systemic fragility.

\begin{enumerate}
\item \textbf{Multiplicative complementarity.} Cognitive capital is not merely
complementary to AI; it is the \emph{base asset} whose level determines the return
to AI use. High-capital agents extract more from AI; low-capital agents cannot
effectively prompt, verify, or transfer AI outputs. This creates a feedback: capital
erosion reduces the marginal product of AI, which does not reduce AI reliance---it
increases it, because output targets remain fixed.

\item \textbf{Invisible debt.} Cognitive debt is unobservable in normal operation.
Skill atrophy surfaces only when AI is unavailable, when tasks exceed the AI's
capability frontier, or when AI produces confident hallucinations that require
first-principles correction. In the model, this corresponds to the stress state:
debt exposure $\kappa z b_{it}$ is visible only at shock intensity $z > 0$.

\item \textbf{Systemic correlation.} AI errors are not idiosyncratic. When agents
adopt the same AI systems, their errors become correlated. When AI-generated content
dominates training pipelines, model collapse \citep{ShumailovEtAl2024} erodes AI
quality over time. Both forces mean that AI failure is a common-mode event, not a
diversifiable risk.
\end{enumerate}

\paragraph{Main results.} In a model with a continuum of agents choosing AI
substitution intensity and deliberate-practice investment, we establish:

\textbf{Proposition~\ref{prop:individual}} shows that rational agents optimally
incur positive cognitive debt. The determinants of equilibrium debt are transparent
and empirically observable: AI quality, output pressure, discount factors, the
subjective probability of AI failure, and the individual's exposure to failure costs.

\textbf{Proposition~\ref{prop:minsky}} establishes the cognitive Minsky moment.
Every tranquil period---a stretch without observed AI failures---updates subjective
risk downward and raises equilibrium AI substitution intensity. Cognitive leverage
$\Omega_t = \bar{B}_t / \bar{K}_t$ rises monotonically while the true crisis
probability also rises. The Minsky divergence---$\hat{\pi}_t \downarrow$ while
$\pi_t \uparrow$---is a necessary consequence of rational Bayesian updating in a
system where the true risk process is endogenous to the belief-driven leverage cycle.

\textbf{Proposition~\ref{prop:convex}} shows that crisis losses are convex in
leverage. Small increases in $\Omega$ near the fragility threshold produce
disproportionately large expected losses, because both the probability and the
severity of a cognitive crisis are increasing in $\Omega$.

\textbf{Proposition~\ref{prop:fcl}} characterises the \emph{false-correction loop}.
When the shadow price of current output exceeds the shadow price of future cognitive
capital recovery, the individually optimal response to an AI failure is to adopt
\emph{more} AI assistance---patching AI errors with AI. The condition is generically
satisfied under competitive output pressure, generating ratchet dynamics in leverage.

\textbf{Proposition~\ref{prop:welfare}} derives the social planner's optimum and
quantifies the three externalities that drive the decentralised gap: (i) a systemic
risk externality, because my AI use raises the aggregate leverage ratio and the
common-mode failure probability without my bearing the full loss; (ii) a cognitive
public-goods externality, because societal verification capacity depends on the
aggregate stock of unaided cognitive capital; and (iii) an arms-race externality,
because rising average AI use shifts the competitive output benchmark upward,
compelling others to increase AI reliance.

\textbf{Proposition~\ref{prop:hetero}} (heterogeneous agents) establishes a
\emph{reversal of fortune}: high-cognitive-capital agents adopt AI more intensively
(since the return to AI is proportional to $k$), erode their capital faster, and
eventually end up with \emph{less} unaided cognitive capital than low-capital agents
who adopted AI conservatively. In the long run, AI adoption compresses the
distribution of cognitive capital even as it initially widens output inequality.

Figure~\ref{fig:minsky} illustrates the Minsky divergence and leverage dynamics;
Figure~\ref{fig:phase} shows the equilibrium trajectory through the Hedge--Speculative--Ponzi
phase space; Figure~\ref{fig:hetero} displays the heterogeneous-agent inequality reversal.

\paragraph{Related literature.} Our paper lies at the intersection of four strands.

\textit{Technology and human capital.} \citet{AcemogluRestrepo2018} study automation
as task displacement; \citet{AcemogluRestrepo2019} introduce reinstatement tasks.
We differ in focusing on the \emph{intertemporal} erosion of the capital stock
rather than the contemporaneous task allocation, and in introducing a debt state
variable with its own compounding dynamics.

\textit{Financial fragility.} The Minsky framework \citep{Minsky1986} classifies
financial positions by the relationship between income flows and debt obligations.
\citet{KiyotakiMoore1997} formalise collateral constraints in credit cycles.
\citet{BernankeGertlerGilchrist1999} develop the financial accelerator.
We import the Minsky classification into the cognitive domain, with cognitive capital
playing the role of collateral and cognitive debt compounding through habit formation
and rising switching costs.

\textit{AI and productivity.} \citet{BrynjolfssonEtAl2025} and \citet{NoyZhang2023}
document short-run productivity gains from AI access. Crucially, \citet{DellAcquaEtAl2023}
identify the \emph{jagged frontier}: AI improves performance on in-distribution tasks
but \emph{reduces} it on out-of-distribution tasks for high-AI-reliance workers.
This asymmetry is the empirical anchor for our stress-state production function.

\textit{Systemic risk and externalities.} \citet{AllenGale2000} and
\citet{AcemogluEtAl2012} study contagion in financial networks. Model collapse
\citep{ShumailovEtAl2024} is the cognitive analogue: AI-generated content in training
pipelines degrades AI quality in a manner that is correlated across all users of a
given model family.

\paragraph{Organisation.} Section~\ref{sec:model} presents the model.
Section~\ref{sec:individual} solves the individual problem (Proposition~\ref{prop:individual}).
Section~\ref{sec:aggregate} characterises aggregate dynamics (Propositions~\ref{prop:minsky}
and \ref{prop:convex}).
Section~\ref{sec:fcl} analyses post-crisis dynamics (Proposition~\ref{prop:fcl}).
Section~\ref{sec:welfare} derives the social optimum and optimal policy
(Proposition~\ref{prop:welfare}).
Section~\ref{sec:hetero} develops the two-type heterogeneous-agent economy
(Proposition~\ref{prop:hetero}) and the reversal of fortune.
Section~\ref{sec:discussion} discusses extensions and limitations.
Section~\ref{sec:conclusion} concludes.
All proofs are in the Appendix.

\section{The Model}
\label{sec:model}

\subsection{Environment}

Time is discrete, $t = 0, 1, 2, \ldots$\;.
There is a unit continuum of agents indexed by $i \in [0,1]$.
Each agent is characterised by two state variables:
\begin{itemize}
\item $k_{it} \geq 0$: \emph{cognitive capital}---the unaided ability to reason,
verify, synthesise, and transfer knowledge. This is the stock of intellectual
competence that functions without AI support.
\item $b_{it} \geq 0$: \emph{cognitive debt}---the gap between current task demands
and current cognitive capital, made viable by AI. Formally, $b_{it}$ is the stock of
unverified reasoning obligations: outputs produced with AI assistance that the agent
could not independently reproduce, verify, or correct.
\end{itemize}
Each period, agent $i$ chooses two controls:
\begin{itemize}
\item $a_{it} \in [0,1]$: \emph{AI substitution intensity}---the fraction of
cognitively demanding steps outsourced to AI. We reserve the label \emph{substitutive}
AI for $a_{it} > 0$; AI used purely as a tutor, verifier, or feedback device that
requires the agent to engage first-principles reasoning corresponds to $a_{it} = 0$.
\item $x_{it} \geq 0$: \emph{deliberate-practice investment}---cognitive debt
repayment: hand-derivation, unaided testing, active verification, first-principles
reconstruction.
\end{itemize}

\subsubsection*{Competitive output benchmark}

Agents operate in a competitive environment where output is evaluated against an
endogenous aggregate benchmark. Let $\bar{y}_t$ denote the symmetric-equilibrium
average output. The per-period payoff for agent $i$ in state $s$ is:
\begin{equation}
u_{it}^s = y_{it}^s - \chi\,\max\bigl\{0,\; \bar{y}_t - y_{it}^s\bigr\},
\label{eq:benchmark}
\end{equation}
where $\chi \geq 0$ is a competitive pressure parameter capturing wage penalties,
labour-market displacement, or organisational ranking from falling below the
benchmark. Each agent takes $\bar{y}_t$ as given; the social planner internalises
its endogeneity. In the symmetric equilibrium $y_{it}^s = \bar{y}_t$, so the
penalty is zero in realisation; but the threat of falling below the benchmark when
others maintain high $A_t$ creates a strategic complementarity that drives
over-adoption at the social level (Section~\ref{sec:welfare}). The output constraint
$y_{it} \geq w$ in the post-crisis analysis (Section~\ref{sec:fcl}) is the
realised form of this benchmark when competitive pressure is acute.

\subsection{Production Technology}
\label{sec:production}

Nature draws a state $s_{t} \in \{N, S\}$ each period, where $N$ is the normal state
and $S$ is the stress state. The true probability of the stress state is $\pi_t$,
which is determined endogenously in equilibrium (Section~\ref{sec:aggregate}).
Agent $i$ holds the subjective belief $\hpi_{t}$ about the probability of the stress
state, which is updated via Bayes' rule as described below.

\subsubsection*{Normal-state production}

In state $N$, output is
\begin{equation}
y_{it}^{N} = k_{it} \cdot G(a_{it};\, q_t),
\label{eq:yN}
\end{equation}
where $q_t > 0$ denotes the quality of the prevailing AI system.

\begin{assumption}[Normal-state technology]
\label{ass:G}
$G : [0,1] \times \R_{++} \to [1,\infty)$ is twice continuously differentiable
and satisfies:
\begin{enumerate}
\item[\emph{(i)}] $G(0;\, q) = 1$ for all $q > 0$.
\item[\emph{(ii)}] $G_a(a;\, q) > 0$ and $G_{aa}(a;\, q) < 0$ for all $(a, q)$.
\item[\emph{(iii)}] $G_q(a;\, q) > 0$ and $G_{aq}(a;\, q) > 0$ for all $(a, q)$.
\item[\emph{(iv)}] $\lim_{a \to 0} G_a(a;\, q) = +\infty$ \emph{(Inada)}.
\end{enumerate}
\end{assumption}

Condition (i) normalises: without AI, output equals cognitive capital $k_{it}$.
Condition (ii) says AI raises output at a diminishing marginal rate. Condition (iii)
says higher AI quality raises both the level and the marginal product of AI use.
Condition (iv) ensures an interior solution whenever the cost of AI use is finite.

\begin{remark}[Cognitive capital as collateral]
\label{rem:collateral}
The multiplicative structure $y^N = k_{it} \cdot G(a_{it};\, q_t)$ encodes the key
mechanism of the paper. The marginal product of AI is
$\partial y^N / \partial a = k_{it} \cdot G_a(a_{it};\, q_t)$,
which is strictly proportional to $k_{it}$. Thus cognitive capital is not merely
an additive complement to AI; it is the \emph{scale factor}---the base asset---that
determines how much value an agent extracts from any given level of AI substitution.
An agent with $k = 0$ obtains zero output regardless of $a$: she cannot prompt
effectively, cannot evaluate responses, cannot transfer AI-generated results to novel
tasks. This multiplicative complementarity generates the leverage dynamics central to
the model.
\end{remark}

\subsubsection*{Stress-state production}

In state $S$---an AI outage, a hallucination in a critical task, an out-of-distribution
problem, an audit requiring first-principles justification, or any event that forces
the agent to operate at or beyond the AI's capability frontier---output is
\begin{equation}
y_{it}^{S} = k_{it} \cdot \tilde{G}(a_{it};\, q_t,\, z_t) - \kappa\, z_t\, b_{it},
\label{eq:yS}
\end{equation}
where $z_t > 0$ is the intensity of the stress shock drawn from a distribution with
support $[\underline{z}, \bar{z}]$, and $\kappa > 0$ is the debt-exposure coefficient.

\begin{assumption}[Stress-state technology]
\label{ass:Gtilde}
$\tilde{G}(a;\, q, z) = G\bigl(a;\, q \cdot s(z)\bigr)$, where
$s : \R_{++} \to (0, 1]$ satisfies $s(0^+) = 1$, $s'(z) < 0$, and $s(z) \to 0$
as $z \to \infty$.
\end{assumption}

Assumption~\ref{ass:Gtilde} says the stress state reduces \emph{effective AI quality}
by a factor $s(z) \in (0,1)$: the AI system is less reliable, its outputs require
deeper verification, or the task has moved outside the training distribution. Under this
specification, $\tilde{G}_a = G_a(\cdot;\, q s(z)) < G_a(\cdot;\, q)$, so AI assistance
provides a strictly smaller marginal benefit in the stress state.

The term $\kappa z_t b_{it}$ is the \emph{debt-service cost}: cognitive debt $b_{it}$
represents the accumulated gap between task demands and unaided cognitive capacity.
Under stress, this gap is exposed. Agents who have outsourced their verification,
derivation, and error-correction to AI cannot fulfil these obligations from their
own cognitive resources; the resulting output shortfall is proportional to $b_{it}$
and scaled by shock severity $z_t$.

\begin{remark}[Jagged frontier]
\citet{DellAcquaEtAl2023} document that AI improves performance for in-distribution
tasks but reduces performance for out-of-distribution tasks, particularly for
workers who rely heavily on AI. Equation~\eqref{eq:yS} formalises this: at high
$a_{it}$, the agent's unaided capacity $k_{it}$ is low (from learning atrophy), and
the AI's effective quality $q s(z)$ is low. Both forces simultaneously reduce $y^S$.
\end{remark}

\subsection{Cognitive Capital and Debt Dynamics}
\label{sec:dynamics}

The state variables evolve according to:
\begin{align}
k_{i,t+1} &= (1-\delta)\, k_{it} + \ell(1 - a_{it}) + \nu\, x_{it}, \label{eq:k_law}\\
b_{i,t+1} &= (1 + r_b)\, b_{it} + d(a_{it}) - \rho\, x_{it}, \label{eq:b_law}
\end{align}
where $\delta \in (0,1)$ is the cognitive depreciation rate, $r_b > 0$ is the
cognitive debt compounding rate, $\nu > 0$ is the learning efficiency of
deliberate practice, and $\rho > 0$ is its debt-reduction efficiency.

\begin{assumption}[Dynamics]
\label{ass:dynamics}
\begin{enumerate}
\item[\emph{(i)}] $\ell : [0,1] \to \R_+$ is $C^2$ with $\ell(0) = 0$,
$\ell'(s) > 0$, $\ell''(s) \leq 0$. (Learning by doing is increasing and weakly
concave in unaided effort fraction $1-a$.)
\item[\emph{(ii)}] $d : [0,1] \to \R_+$ is $C^2$ with $d(0) = 0$,
$d'(a) > 0$, $d''(a) > 0$. (Cognitive debt accumulation is increasing and strictly
convex in AI substitution intensity.)
\item[\emph{(iii)}] $r_b > 0$. (Cognitive debt compounds: foundational gaps raise
the marginal cost of subsequent learning and error-detection.)
\end{enumerate}
\end{assumption}

The law of motion for $k$ reflects two learning channels. The term
$\ell(1-a_{it})$ is \emph{learning by doing} from unaided cognition: working
through a problem independently---even imperfectly---builds transferable capacity.
When $a_{it} = 1$, this channel is shut: the agent never engages the reasoning
process. The term $\nu x_{it}$ is \emph{deliberate practice}: targeted effort to
rebuild or maintain cognitive capacity.

The law of motion for $b$ reflects three debt dynamics. The term $(1+r_b) b_{it}$
captures compounding: existing gaps make new gaps easier to acquire (habit formation,
rising prompt dependency, declining verification skills). The term $d(a_{it})$
reflects new debt issuance, convex in $a_{it}$ because marginal substitution increasingly
replaces higher-order reasoning. The term $-\rho x_{it}$ is debt repayment.

We impose the non-negativity constraint $b_{i,t+1} \geq 0$: cognitive debt cannot
become negative (agents cannot be ``ahead'' of their task demands). This is enforced
via the upper bound on deliberate practice:
\begin{equation}
x_{it} \leq \frac{(1+r_b)\,b_{it} + d(a_{it})}{\rho}.
\label{eq:b_nonneg}
\end{equation}
When this constraint is slack, the dynamics follow \eqref{eq:b_law} as written.

\begin{assumption}[Debt compounding dominates depreciation]
\label{ass:compounding}
$r_b > \delta$ and there exists $\bar{a} < 1$ such that for $a \geq \bar{a}$,
$\ell(1-a)/K_t < r_b \cdot \Omega_t$ (the cognitive capital accumulation from
practice does not offset leverage compounding at high substitution rates).
\end{assumption}

Assumption~\ref{ass:compounding} ensures that cognitive debt compounds faster than
cognitive capital depreciates naturally. Without it, tranquil-period leverage could
self-correct through passive depreciation; with it, the leverage ratio drifts upward
unless actively repaid via deliberate practice.

\begin{remark}[Hedge, Speculative, and Ponzi Cognition]
\label{rem:minsky-taxonomy}
Following \citet{Minsky1986}'s financial classification, define:
\begin{itemize}
\item \emph{Hedge cognition}: $k_{it} \geq b_{it} + \text{task demand}$.
The agent can service all cognitive obligations from own capital.
\item \emph{Speculative cognition}: $k_{it} < \text{task demand}$, but
$k_{it} + \text{AI support} \geq \text{task demand}$. The agent meets current
demands only by rolling over AI reliance---paying ``interest'' but not principal.
\item \emph{Ponzi cognition}: $d(a_{it}) + r_b b_{it} > \rho x_{it}^{\max}$.
New debt and compounding exceed maximum repayment capacity.
The agent sustains output only by escalating $a_{it}$, constituting a
cognitive Ponzi scheme.
\end{itemize}
The central theoretical result (Proposition~\ref{prop:minsky}) is that the
decentralised equilibrium systematically migrates agents from hedge through speculative
to Ponzi cognition during tranquil periods.
\end{remark}

\subsection{AI Quality and Model Collapse}
\label{sec:ai_quality}

AI quality $q_t$ is determined at the aggregate level. When agents extensively use
AI-generated content, two degradation forces operate.

\begin{assumption}[AI quality dynamics]
\label{ass:quality}
\begin{equation}
q_{t+1} = \bar{q} \cdot (1 - \gamma \bA_t) \cdot (1 + \eta\, \Omega_t)^{-1},
\label{eq:quality}
\end{equation}
where $\bA_t = \int_0^1 a_{it}\, \diff i$, $\Omega_t = \bB_t / \bK_t$ is the
aggregate cognitive leverage ratio, $\gamma \in (0,1)$ (ensuring $q_{t+1} > 0$
whenever $\bA_t \in [0,1]$), and $\eta > 0$.
\end{assumption}

The first factor $(1 - \gamma \bA_t)$ captures \emph{model collapse}
\citep{ShumailovEtAl2024}: when AI-generated content enters training pipelines at
scale, model quality degrades because the diversity and grounding of the training
distribution shrinks. The second factor $(1 + \eta\Omega_t)^{-1}$ captures
\emph{verification failure}: as aggregate leverage $\Omega_t$ rises, fewer agents
retain the unaided capacity to detect and correct AI errors, so bugs, hallucinations,
and systematic biases propagate unfiltered into knowledge stocks.

\subsection{Information Structure and Belief Dynamics}
\label{sec:beliefs}

Agents are rational conditional on a misspecified perceived law of motion in which
stress arrivals are estimated from realised crisis frequencies. They observe their
own output and whether a crisis occurs, but do not observe the aggregate leverage
ratio $\Omega_t$ or the true crisis probability $\pi_t$. The subjective probability
of the stress state is updated by:
\begin{equation}
\hpi_{t+1} = (1-\lambda)\, \hpi_t + \lambda \cdot \mathbf{1}\{\text{crisis at } t\},
\label{eq:belief}
\end{equation}
where $\lambda \in (0,1)$ is the learning rate. In a period without a crisis,
$\hpi_{t+1} < \hpi_t$: subjective risk falls. This rule is adaptive rather than
fully Bayesian: a true Bayesian who knew $\pi_t = \Pi(\Omega_t, M_t)$ and the
aggregate law of motion would filter $\Omega_t$ from the absence of crises, and
would not necessarily lower $\hat\pi_t$ during an upswing. The updating rule
\eqref{eq:belief} captures the empirically relevant case of misspecified learning
from realised crisis frequencies.

The true crisis probability depends on aggregate leverage and AI market concentration:
\begin{equation}
\pi_t = \Pi(\Omega_t, M_t), \quad \Pi_\Omega > 0,\; \Pi_{\Omega\Omega} \geq 0,\;
\Pi_M > 0,\; \Pi_{\Omega M} \geq 0,\; \Pi(0,\cdot) = 0,\;
\lim_{\Omega \to \infty}\Pi(\Omega,M_t)=1,
\label{eq:true_pi}
\end{equation}
where $M_t$ is the Herfindahl--Hirschman index of AI system market share.
Higher concentration raises crisis probability ($\Pi_M > 0$) because correlated
failures are more likely when agents share the same AI systems, and concentrates
the systemic impact of any single model collapse.
We adopt the parametric form $\Pi(\Omega,M) = 1 - e^{-\lambda_\pi M\,\Omega^\gamma}$
with $\gamma \geq 1$ for the proofs that require explicit computation; the qualitative
results hold for any $\Pi$ satisfying \eqref{eq:true_pi}.

\section{Individual Optimization}
\label{sec:individual}

\subsection{The Agent's Problem}

Agent $i$ maximises discounted expected utility, taking as given the AI quality path
$\{q_t\}$, aggregate variables $\{\Omega_t, \bar{y}_t\}$, and the subjective crisis
probability $\hpi_t$:
\begin{equation}
V(k, b;\, \hpi, q) = \max_{a \in [0,1],\; x \geq 0} \left\{
\tilde{U}(k,a,b;\,\hpi,q,\bar{y})
- c(x) + \beta\, \E\bigl[V(k', b';\, \hpi', q')\bigr]
\right\},
\label{eq:bellman}
\end{equation}
where $c(x)$ is the direct cost of deliberate practice with $c(0) = 0$, $c'(x) > 0$,
$c''(x) > 0$; $\beta \in (0,1)$ is the discount factor; $k', b'$ follow
\eqref{eq:k_law}--\eqref{eq:b_law}; and
\[
\tilde{U} \equiv (1-\hpi)\,u^N(k,a;q,\bar{y}) + \hpi\,\E_z\bigl[u^S(k,a,b;q,z,\bar{y})\bigr],
\qquad u^s = y^s - \chi\max\bigl\{0,\,\bar{y} - y^s\bigr\}.
\]
In the symmetric equilibrium $y^N_{it} = \bar{y}_t$, so the benchmark penalty is
zero in realisation. The individual FOCs derived below treat $\bar{y}_t$ as fixed,
yielding the same conditions as if $\chi=0$. The social planner, however, internalises
$\partial\bar{y}_t/\partial A_t = K_t G_a(A_t;q_t) > 0$, generating the arms-race
externality quantified in Proposition~\ref{prop:welfare}.

To characterise individual incentives in closed form and derive comparative statics,
we work with a two-period version of the problem. The infinite-horizon problem
yields qualitatively identical results (Appendix~\ref{app:infinite}).

\subsubsection*{Two-period analytical model}

An agent lives for two periods, $t=0$ and $t=1$. In $t=0$ she chooses $a$ and $x$.
In $t=1$ she consumes the residual value of her state. The period-1 value function is:
\begin{equation}
V_1(k', b') = \mu_k\, k' - \mu_b\, b',
\label{eq:V1}
\end{equation}
where $\mu_k > 0$ and $\mu_b > 0$ are shadow values of cognitive capital and debt
in the terminal period (derived from the infinite-horizon envelope conditions in
Appendix~\ref{app:shadow}). The objective is:
\begin{equation}
\max_{a, x} \;\underbrace{(1-\hpi)\, k G(a;q) + \hpi\, \E_z\bigl[k\tilde{G}(a;q,z) - \kappa z b\bigr]}_{\text{expected current output}}
- c(x) + \beta\, V_1(k', b').
\label{eq:2period}
\end{equation}

Expanding using \eqref{eq:k_law}--\eqref{eq:b_law} and \eqref{eq:V1}:
\begin{equation}
\max_{a,x} \;\bigl[(1-\hpi)\, k G(a;q) + \hpi\,\E_z[k\tilde{G}(a;q,z)]\bigr]
- \hpi\, \kappa\, \bar{z}\, b
- c(x)
+ \beta\,\mu_k\,\bigl[(1-\delta)k + \ell(1-a) + \nu x\bigr]
- \beta\,\mu_b\,\bigl[(1+r_b)b + d(a) - \rho x\bigr],
\label{eq:2period_expanded}
\end{equation}
where $\bar{z} = \E[z \mid S]$.

\subsubsection*{First-order conditions}

Differentiating \eqref{eq:2period_expanded} with respect to $a$ yields the
first-order condition for AI substitution:
\begin{equation}
\boxed{
k\,\bigl[(1-\hpi)\,G_a(a^*;q) + \hpi\,\tilde{G}_a(a^*;q,\bar{z})\bigr]
= \beta\,\bigl[\mu_k\,\ell'(1-a^*) + \mu_b\,d'(a^*)\bigr].
}
\label{eq:foc_a}
\end{equation}
The left-hand side is the expected marginal product of AI in current output.
The right-hand side is the discounted marginal future cost: forgone learning by doing
(valued at $\mu_k$) plus newly issued cognitive debt (valued at $\mu_b$).

Differentiating with respect to $x$ yields the condition for deliberate practice:
\begin{equation}
c'(x^*) = \beta\,(\mu_k\,\nu + \mu_b\,\rho).
\label{eq:foc_x}
\end{equation}
Practice is chosen until its marginal cost equals the discounted return: increasing
cognitive capital (at rate $\nu$, valued at $\mu_k$) plus reducing cognitive debt
(at rate $\rho$, valued at $\mu_b$).

\subsection{Proposition 1: Rational Cognitive Debt}

\begin{proposition}[Rational Cognitive Debt]
\label{prop:individual}
Under Assumptions~\ref{ass:G}--\ref{ass:dynamics}, for any $\hpi \in [0,1)$,
$q > 0$, and $k > 0$, suppose additionally that
\begin{equation}
k\,\bigl[(1-\hpi)\,G_a(1;\,q) + \hpi\,\tilde{G}_a(1;\,q,\bar{z})\bigr]
< \beta\,\bigl[\mu_k\,\ell'(0) + \mu_b\,d'(1)\bigr].
\label{eq:boundary1}
\end{equation}
(The marginal future cost of full substitution exceeds its marginal current benefit
at $a=1$.) Then:
\begin{enumerate}
\item[\emph{(i)}] There exists a unique interior solution $a^* \in (0,1)$ to
\eqref{eq:foc_a}, and the resulting cognitive debt issuance satisfies $d(a^*) > 0$.
\item[\emph{(ii)}] The equilibrium AI substitution intensity satisfies:
\[
\frac{\partial a^*}{\partial q} > 0, \quad
\frac{\partial a^*}{\partial k} > 0, \quad
\frac{\partial a^*}{\partial \hpi} < 0, \quad
\frac{\partial a^*}{\partial \beta} < 0, \quad
\frac{\partial a^*}{\partial \mu_b} < 0, \quad
\frac{\partial a^*}{\partial \kappa} < 0.
\]
\item[\emph{(iii)}] The ``cognitive debt wedge'' satisfies:
\[
a^*(\hpi) < a^{\text{no-debt}}
\]
where $a^{\text{no-debt}}$ is the optimal AI use when $d \equiv 0$, confirming that
debt accumulation is the mechanism restraining (but not eliminating) AI substitution.
\end{enumerate}
\end{proposition}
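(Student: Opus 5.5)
The plan is to reduce everything to a single-variable monotonicity argument on the first-order condition \eqref{eq:foc_a}. Write
\[
F(a) \equiv k\bigl[(1-\hpi)G_a(a;q) + \hpi\,\tilde G_a(a;q,\bar z)\bigr] - \beta\bigl[\mu_k\,\ell'(1-a) + \mu_b\,d'(a)\bigr].
\]
The first step is to show $F$ is strictly decreasing on $(0,1]$. The bracket on the left is decreasing because $G_{aa}<0$ (Assumption~\ref{ass:G}(ii)), and this applies equally to $\tilde G_a = G_a(\cdot;q s(\bar z))$ by Assumption~\ref{ass:Gtilde}. The subtracted term is increasing because $\frac{d}{da}\ell'(1-a) = -\ell''(1-a)\ge 0$ and $d''>0$ (Assumption~\ref{ass:dynamics}).

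Next I pin down the endpoints. As $a\to 0$, Inada (Assumption~\ref{ass:G}(iv)), applied to both $G$ and $\tilde G$, and the fact that $k>0$, $1-\hpi>0$ give $F(0^+)=+\infty$. At $a=1$, hypothesis \eqref{eq:boundary1} says exactly $F(1)<0$. Continuity and the intermediate value theorem then give a root $a^*\in(0,1)$, and strict monotonicity makes it unique. Since $F$ is the $a$-derivative of the objective \eqref{eq:2period_expanded}, that objective is strictly concave in $a$, so the root is the maximiser. Finally, $d(a^*)>0$ follows from $d(0)=0$, $d'>0$ and $a^*>0$, which completes (i).

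For (ii) I apply the implicit function theorem: $\partial a^*/\partial\theta = -F_\theta/F_a$ with $F_a<0$, so $\operatorname{sign}(\partial a^*/\partial\theta)=\operatorname{sign}(F_\theta)$. I then sign each parameter derivative.
\begin{itemize}
\item For $q$: $F_q = k[(1-\hpi)G_{aq} + \hpi\, s(\bar z)G_{aq}(\cdot;qs(\bar z))]>0$ by Assumption~\ref{ass:G}(iii).
\item For $k$: $F_k$ is the positive bracket.
\item For $\hpi$: $F_{\hpi} = k[\tilde G_a - G_a]<0$. This holds because $s(\bar z)<1$ and $G_{aq}>0$ imply $G_a(a;qs(\bar z))<G_a(a;q)$.
\item For $\beta$: $F_\beta = -[\mu_k\ell'+\mu_b d']<0$.
\item For $\mu_b$: $F_{\mu_b} = -\beta d'(a^*)<0$.
\end{itemize}

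The main obstacle is the sign for $\kappa$. In \eqref{eq:2period_expanded} the debt-service term $-\hpi\kappa\bar z b$ does not depend on $a$, so holding $\mu_b$ fixed gives $F_\kappa=0$. The strict inequality must therefore come through the shadow value of debt: $\partial a^*/\partial\kappa = (\partial a^*/\partial\mu_b)(\partial\mu_b/\partial\kappa)$. I would invoke the envelope characterisation of $\mu_b$ in Appendix~\ref{app:shadow}, where future debt service enters $\mu_b$ with coefficient proportional to $\kappa\,\hpi\,\bar z$, to conclude $\partial\mu_b/\partial\kappa>0$. If the appendix does not deliver this, the $\kappa$ claim should be stated as a weak inequality (equal to $0$) for fixed $\mu_b$.

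For (iii), define $F^{\rm nd}(a)$ as $F(a)$ with the $\mu_b d'(a)$ term deleted. Then $F^{\rm nd}(a) = F(a)+\beta\mu_b d'(a) > F(a)$ for all $a$, and $F^{\rm nd}$ is still strictly decreasing with $F^{\rm nd}(0^+)=+\infty$. There are two cases. If $F^{\rm nd}(1)\ge 0$, then $a^{\rm no\text{-}debt}=1>a^*$. Otherwise its unique root $a^{\rm nd}$ satisfies $F(a^{\rm nd})<F^{\rm nd}(a^{\rm nd})=0=F(a^*)$, and since $F$ is decreasing this gives $a^{\rm nd}>a^*$. In both cases $a^*<a^{\rm no\text{-}debt}$, while part (i) keeps $a^*>0$, which gives ``restraining but not eliminating''.
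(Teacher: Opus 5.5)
Your proposal is correct and follows the paper's proof essentially step for step. Both use the same FOC residual, Inada at $a=0$ plus condition \eqref{eq:boundary1} at $a=1$ with the intermediate value theorem, and strict monotonicity from $G_{aa}<0$, $\ell''\le 0$, $d''>0$ for uniqueness; both then sign (ii) by the implicit function theorem, route the $\kappa$ result through $\partial\mu_b/\partial\kappa>0$, and get (iii) from the extra $\beta\mu_b d'(a)$ term. Your version is slightly tighter than the paper's in two places: you handle the possible corner $a^{\text{no-debt}}=1$ in (iii), and your caveat on $\kappa$ is warranted, since the appendix formula $\mu_b=\hpi\kappa\bar z/(1-\beta(1+r_b))$ gives $\partial\mu_b/\partial\kappa=0$ at $\hpi=0$.
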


\begin{proof}
See Appendix~\ref{app:prop1}.
\end{proof}

\begin{remark}[Economic interpretation of comparative statics]
Part (ii) translates directly into policy-relevant predictions:
\begin{itemize}
\item $\partial a^*/\partial q > 0$: AI capability improvements raise equilibrium
debt issuance---a form of ``keeping up'' with the technology that progressively
erodes the cognitive base.
\item $\partial a^*/\partial k > 0$: Agents with higher cognitive capital use AI
more intensively (the multiplicative return is higher), generating higher gross debt.
This creates a non-monotone inequality trajectory: high-$k$ agents gain more
short-run productivity but face larger potential capital erosion.
\item $\partial a^*/\partial \hpi < 0$: Higher perceived stress probability deters
AI substitution. This is the ``insurance'' channel: agents who believe AI may fail
invest in maintaining their unaided capacity.
\item $\partial a^*/\partial \beta < 0$: More patient agents internalise more of the
future cost of cognitive capital erosion. Short-termism is a direct driver of
excessive AI reliance.
\end{itemize}
\end{remark}

\section{Aggregate Dynamics and the Cognitive Minsky Moment}
\label{sec:aggregate}

\subsection{Competitive Equilibrium}

\begin{definition}[Competitive Equilibrium]
A competitive equilibrium is a sequence of individual policies $\{a_{it}^*, x_{it}^*\}$,
aggregate variables $\{K_t, B_t, A_t, \Omega_t\}$, AI quality $\{q_t\}$, and belief
sequences $\{\hpi_t, \pi_t\}$ such that:
\begin{enumerate}
\item[\emph{(i)}] Each agent solves \eqref{eq:bellman} taking $\{q_t, \hpi_t\}$
as given.
\item[\emph{(ii)}] Markets clear: $K_t = \int k_{it}\diff i$,
$B_t = \int b_{it}\diff i$, $A_t = \int a_{it}\diff i$,
$\Omega_t = B_t/K_t$.
\item[\emph{(iii)}] AI quality follows \eqref{eq:quality}.
\item[\emph{(iv)}] Beliefs follow \eqref{eq:belief} and \eqref{eq:true_pi}.
\end{enumerate}
\end{definition}

We focus on symmetric equilibria in which $a_{it}^* = a_t^*$ and $x_{it}^* = x_t^*$
for all $i$, so $K_t = k_t$ and $B_t = b_t$ in the representative-agent
characterisation. Heterogeneous-agent extensions are developed in
Section~\ref{sec:hetero}.

\subsection{Proposition 2: The Cognitive Minsky Moment}

\begin{proposition}[Cognitive Minsky Moment]
\label{prop:minsky}
Under Assumptions~\ref{ass:G}--\ref{ass:compounding} and adaptive (misspecified)
belief updating \eqref{eq:belief}, suppose a tranquil period
$\mathcal{T} = \{0, 1, \ldots, T-1\}$ in which no crisis is realised, and that the
belief-updating effect dominates the endogenous AI-quality degradation effect:
\begin{equation}
\left|\frac{\partial a^*}{\partial \hpi}\,\Delta\hpi_t\right|
> \left|\frac{\partial a^*}{\partial q}\,\Delta q_t\right|
\quad \text{for all } t \in \mathcal{T}.
\label{eq:dominance}
\end{equation}
Then along any equilibrium path through $\mathcal{T}$:
\begin{enumerate}
\item[\emph{(i)}] Subjective risk is strictly decreasing: $\hpi_{t+1} < \hpi_t$
for all $t \in \mathcal{T}$.
\item[\emph{(ii)}] Equilibrium AI substitution intensity is strictly increasing:
$a_{t+1}^* > a_t^*$ for all $t \in \mathcal{T}$.
\item[\emph{(iii)}] Aggregate cognitive leverage is strictly increasing:
$\Omega_{t+1} > \Omega_t$ for all $t \in \mathcal{T}$.
\item[\emph{(iv)}] True crisis probability is strictly increasing:
$\pi_{t+1} = \Pi(\Omega_{t+1},M_t) > \Pi(\Omega_t,M_t) = \pi_t$ for all $t \in \mathcal{T}$.
\item[\emph{(v)}] There exists $T^* < \infty$ such that for all $t > T^*$,
$\hpi_t < \pi_t$ \emph{(the Minsky divergence)}: perceived risk falls below true
systemic risk.
\end{enumerate}
\end{proposition}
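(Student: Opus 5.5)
The proof goes through parts (i)–(v) in order, each step feeding the next. I rely on the comparative statics of Proposition~\ref{prop:individual}, the laws of motion \eqref{eq:k_law}--\eqref{eq:b_law}, Assumption~\ref{ass:compounding}, and the properties \eqref{eq:true_pi} of $\Pi$.

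Part (i) is immediate from \eqref{eq:belief}. With $\mathbf{1}\{\text{crisis at } t\}=0$ we have $\hpi_{t+1}-\hpi_t=-\lambda\hpi_t<0$ whenever $\hpi_0>0$, which I will assume. Iterating gives $\hpi_t=(1-\lambda)^t\hpi_0\to 0$, and this explicit geometric decay will also be used in (v).

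For part (ii), I treat the symmetric policy as a function $a^*_t=a^*(\hpi_t,q_t,k_t,\dots)$ given by the FOC \eqref{eq:foc_a}. I then write $a^*_{t+1}-a^*_t$ as a first-order (or mean-value) expansion whose main terms are $\frac{\partial a^*}{\partial\hpi}\Delta\hpi_t$ and $\frac{\partial a^*}{\partial q}\Delta q_t$. The first term is strictly positive, since $\partial a^*/\partial\hpi<0$ and $\Delta\hpi_t<0$. The second is nonpositive, because model collapse and rising $\Omega$ lower $q$. The dominance condition \eqref{eq:dominance} then gives $a^*_{t+1}>a^*_t$.

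The main obstacle is the remaining $k_t$ channel. Since $\partial a^*/\partial k>0$, a falling $k_t$ would push $a^*$ down, and \eqref{eq:dominance} does not mention $k$. I would handle this in one of two ways:
\begin{itemize}
\item read \eqref{eq:dominance} as bounding the sum of all adverse channels, i.e.\ as a net statement, and state this explicitly;
\item note that in the two-period structure the terminal shadow values $\mu_k,\mu_b$ are fixed and $k$ enters only as a scale on the left side of \eqref{eq:foc_a}, then argue that the $k$-effect is second order over a tranquil spell.
\end{itemize}
I would commit to the first, stating it as the interpretation of the hypothesis.

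For part (iii), I compute the ratio dynamics directly:
\[
\Omega_{t+1}=\frac{(1+r_b)b_t+d(a_t)-\rho x_t}{(1-\delta)k_t+\ell(1-a_t)+\nu x_t}.
\]
Because $x^*$ is pinned by \eqref{eq:foc_x}, it is constant over the tranquil period. $\Omega_{t+1}>\Omega_t$ is then equivalent to
\[
(r_b+\delta)\,b_t+d(a_t)-\rho x^* > \Omega_t\bigl[\ell(1-a_t)+\nu x^*\bigr]-\delta\,\Omega_t k_t+\dots
\]
More cleanly, dividing through by $k_t$ reduces it to showing that the debt growth rate exceeds the capital growth rate. Assumption~\ref{ass:compounding} supplies this once $a_t\ge\bar a$: since $r_b>\delta$, the compounding term dominates depreciation, and $\ell(1-a)/K<r_b\Omega$ controls the learning-by-doing term. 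Before $a_t$ reaches $\bar a$, I use part (ii). Monotone increasing $a_t$ makes $d(a_t)$ rise and $\ell(1-a_t)$ fall, so the numerator's increment grows and the denominator's shrinks along the path. I expect this step to need a mild additional condition, namely that net debt issuance $d(a_t)-\rho x^*$ is nonnegative. I would flag this as implicit in the hypotheses (Proposition~\ref{prop:individual}(i) gives $d(a^*)>0$).

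Part (iv) then follows from $\Pi_\Omega>0$ with $M_t$ held fixed.

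For part (v), $\hpi_t=(1-\lambda)^t\hpi_0\to0$, while $\pi_t=\Pi(\Omega_t,M_t)$ is increasing and bounded below by $\pi_1=\Pi(\Omega_1,M)>0$, because $\Omega_1>\Omega_0\ge0$ and $\Pi$ is strictly increasing with $\Pi(0,\cdot)=0$. Choosing $T^*=\lceil\log(\pi_1/\hpi_0)/\log(1-\lambda)\rceil$ gives $\hpi_t<\pi_1\le\pi_t$ for all $t>T^*$, provided the tranquil period is long enough that $T>T^*$. Otherwise the statement is read along the tranquil continuation of the path.
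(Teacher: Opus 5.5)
Your route matches the paper's: geometric decay of $\hat{\pi}_t$, a first-order decomposition of $\Delta a_t^*$ closed by \eqref{eq:dominance}, a ratio computation for $\Omega_t$, monotonicity of $\Pi$, and a limit argument for (v). Your caveats are legitimate, and the paper's proof passes over each of them:
\begin{itemize}
\item strict decrease in (i) needs $\hat{\pi}_0>0$;
\item the $k$-channel is omitted from the decomposition in (ii);
\item (v) requires the tranquil period to extend past $T^*$.
\end{itemize}
Your (v) is also cleaner than the paper's. The paper asserts $\Omega_t\to\infty$ and $\pi_t\to 1$ without proof, whereas a positive lower bound on $\pi_t$ is all that is needed. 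That bound does not even require (iii). If net issuance $d(a_t^*)-\rho x^*$ is positive, then $B_t\ge B_1>0$ for $t\ge 1$. Meanwhile $K_t\le\max\{K_0,(\ell(1)+\nu x^*)/\delta\}$. So $\Omega_t$, and hence $\pi_t$, stays bounded away from zero. The genuine gap is in (iii).

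Write $g_B=r_b+(d(a_t)-\rho x_t)/B_t$ and $g_K=-\delta+(\ell(1-a_t)+\nu x_t)/K_t$. Then $\Omega_{t+1}>\Omega_t$ if and only if $g_B>g_K$, i.e.
\[
(r_b+\delta)B_t+d(a_t)-\rho x_t>\Omega_t\bigl[\ell(1-a_t)+\nu x_t\bigr].
\]
Your displayed version carries a spurious $-\delta\Omega_t k_t$, which double-counts depreciation. Neither of your two ingredients delivers this inequality.

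First, take $a_t\ge\bar a$. The inequality $\ell(1-a)/K_t<r_b\Omega_t$ in Assumption~\ref{ass:compounding} compares the learning increment with the compounding increment of $B_t$ in \emph{levels}, not in growth rates. With $x=0$ it yields the display when $\Omega_t\le 1+\delta/r_b$, but not for higher leverage. For example, $K_t=1$, $B_t=10$, $r_b=0.1$, $\delta=0.04$, $\ell(1-a_t)=0.9$, $d(a_t)=0.1$ is consistent with that assumption, yet gives $\Omega_{t+1}=11.1/1.86<10$. The condition $r_b>\delta$ does no work here, since $(1+r_b)/(1-\delta)>1$ for any positive rates.

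Second, take $a_t<\bar a$. Observing that $d(a_t)$ rises and $\ell(1-a_t)$ falls along the path only says the increments drift in a favourable direction. It does not sign $\Omega_{t+1}-\Omega_t$ at any given date. With $B_0=K_0=1$, $x=0$, $\ell(1-a_0)=0.5$ and $d(a_0)=0.1$, leverage falls to $1.2/1.46<1$.

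Your proposed extra condition, nonnegative net issuance, repairs neither case. It also leaves the $\nu x^*/K_t$ term in $g_K$ uncontrolled. The paper's proof has the same hole: it drops $x$, ignores the $\ell$-term in the denominator, and declares the ratio above one from $r_b>\delta$ and $d>0$. So (iii), and with it (iv), does not follow from the stated hypotheses.

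You need an explicit assumption on $\mathcal{T}$, for instance $d(a_t^*)>\rho x_t^*$ and $\ell(1-a_t^*)+\nu x_t^*\le(r_b+\delta)K_t$. This says capital grows no faster than debt compounds. It gives $g_B>r_b\ge g_K$ directly, and $\Omega_{t+1}>0=\Omega_t$ when $B_t=0$.
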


\begin{proof}
See Appendix~\ref{app:prop2}.
\end{proof}

\begin{remark}[Dominance condition and learning specification]
Condition \eqref{eq:dominance} requires that belief-driven increases in AI
substitution outweigh the dampening effect from endogenously falling AI quality
($\partial a^*/\partial q > 0$ while $q_t$ falls from \eqref{eq:quality}).
Without this condition, AI quality erosion would attenuate or reverse the Minsky
dynamics---an important general-equilibrium feedback. Condition \eqref{eq:dominance}
is satisfied when $\gamma \eta$ (the quality-degradation sensitivity) is small
relative to the belief-learning rate $\lambda$, or when the tranquil period is short
enough that $\Delta q_t$ is negligible. Identifying this dominance empirically---via
exogenous variation in AI quality and belief-updating parameters---is a key direction
for future work.

The divergence $\hpi_t < \pi_t$ (part (v)) arises because agents update on
realised crises but cannot infer the true $\pi_t = \Pi(\Omega_t, M_t)$ from
crisis-free periods. Under misspecified learning, each crisis-free period pushes
$\hat\pi_t$ down regardless of the growing leverage. A correctly specified Bayesian
who filtered $\Omega_t$ from the data would not necessarily lower $\hat\pi_t$
during an upswing; the divergence is therefore a consequence of the misspecification,
not of the underlying endogeneity.
\end{remark}

\begin{figure}[t]
\centering
\includegraphics[width=\textwidth]{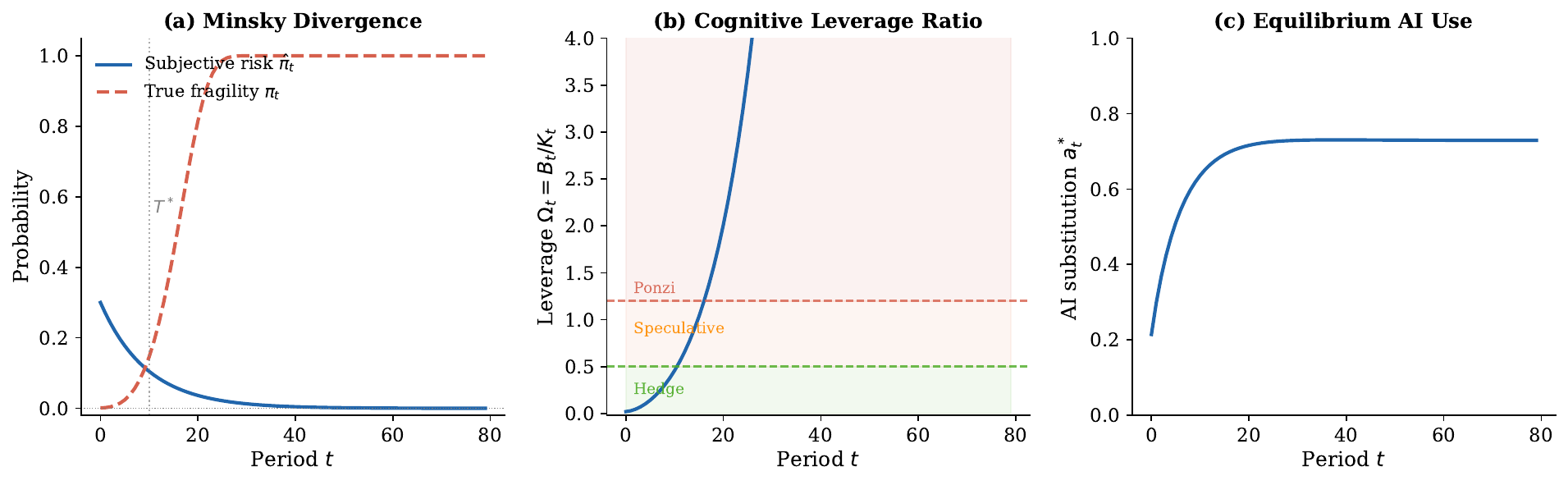}
\caption{\textbf{Minsky Divergence and Cognitive Leverage Dynamics.}
Simulated equilibrium path under the parametric model
($q=1$, $\phi=0.35$, $\eta=0.60$, $\delta=0.04$, $r_b=0.10$, $\beta=0.95$,
$\hat\pi_0=0.30$, no crises realised for $T=80$ periods).
\textit{Panel (a):} Subjective risk $\hat\pi_t$ falls monotonically while true
fragility $\pi_t$ rises; the vertical dotted line marks the crossing point $T^*$.
\textit{Panel (b):} The aggregate cognitive leverage ratio $\Omega_t = B_t/K_t$
drifts upward through the Hedge, Speculative, and Ponzi zones (defined at
$\Omega=0.50$ and $\Omega=1.20$ respectively).
\textit{Panel (c):} Equilibrium AI substitution intensity $a_t^*$ rises as
subjective risk falls, closing the feedback loop.}
\label{fig:minsky}
\end{figure}

\subsection{Proposition 3: Convexity of Crisis Losses}

Define the expected crisis loss at date $t$ as:
\begin{equation}
L_t = \Pi(\Omega_t, M_t) \cdot \kappa\, \E_z\bigl[\max\{0,\, z\, B_t - \mathcal{V}(K_t)\}\bigr],
\label{eq:loss}
\end{equation}
where $\mathcal{V}(K) = K^\alpha$ ($\alpha \in (0,1)$) is the aggregate verification
capacity---the concave function representing society's ability to detect and correct AI
errors as a function of aggregate cognitive capital.

\begin{proposition}[Convex Fragility]
\label{prop:convex}
Under Assumptions~\ref{ass:G}--\ref{ass:quality} and the crisis probability
specification \eqref{eq:true_pi}, the expected crisis loss $L_t$ is:
\begin{enumerate}
\item[\emph{(i)}] Strictly increasing in the aggregate leverage ratio: $\partial L_t / \partial \Omega_t > 0$.
\item[\emph{(ii)}] Locally convex near the fragility threshold: $\partial^2 L_t / \partial \Omega_t^2 > 0$
in a neighbourhood of the fragility threshold $\bar{\Omega}$.
\item[\emph{(iii)}] Strictly increasing in AI model concentration: $\partial L_t / \partial M_t > 0$,
with complementarity $\partial^2 L_t / \partial \Omega_t \partial M_t \geq 0$
(concentration amplifies the marginal cost of leverage).
\end{enumerate}
\end{proposition}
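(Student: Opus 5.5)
The plan is to write $L_t$ as a product $L = \Pi(\Omega,M)\cdot \kappa H(\Omega)$ and differentiate. Here
\[
H(\Omega) \equiv \E_z\bigl[\max\{0,\, z\,\Omega K - K^\alpha\}\bigr],
\]
obtained by substituting $B_t = \Omega_t K_t$ and holding $K_t$ fixed, so that $\Omega$ is varied through $B$. Define the critical shock $z^*(\Omega) = K^{\alpha-1}/\Omega$, so that
\[
H(\Omega) = \int_{z^*(\Omega)}^{\bar{z}} (z\Omega K - K^\alpha)\,dF(z).
\]
Throughout I assume $z$ has a continuous density $f$ on $[\underline{z},\bar{z}]$, and I define the fragility threshold $\bar{\Omega}$ as the leverage at which losses switch on, $\bar{\Omega} = K^{\alpha-1}/\bar{z}$, i.e.\ $z^*(\bar{\Omega}) = \bar{z}$. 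Above $\bar\Omega$ the integrand is positive on a set of positive measure; at or below it, $H \equiv 0$. The statement is therefore read for $\Omega > \bar\Omega$, where losses are non-degenerate; otherwise (i) fails trivially because $L=0$.

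\textbf{Step 1 (derivatives of $H$).} By Leibniz' rule, the boundary term vanishes because the integrand is zero at $z^*$. This gives
\[
H'(\Omega) = K\int_{z^*}^{\bar z} z\,dF > 0
\qquad\text{and}\qquad
H''(\Omega) = -K z^* f(z^*)\,\frac{dz^*}{d\Omega} = K (z^*)^2 f(z^*)/\Omega \;\geq\; 0.
\]
So $H$ is increasing and convex; this is the option-like structure of the loss. The same convexity holds without a density, since $\Omega \mapsto \max\{0, z\Omega K - K^\alpha\}$ is convex pointwise and expectation preserves convexity.

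\textbf{Step 2 (parts (i) and (iii)).} For (i),
\[
L_\Omega = \kappa(\Pi_\Omega H + \Pi H') > 0,
\]
because $\Pi_\Omega > 0$ and $H > 0$ for $\Omega > \bar\Omega$, while $\Pi > 0$ when $\Omega > 0$ and $M > 0$, which follows from the parametric form $1 - e^{-\lambda_\pi M \Omega^\gamma}$. For (iii), $H$ does not depend on $M$, so
\[
L_M = \kappa \Pi_M H > 0
\qquad\text{and}\qquad
L_{\Omega M} = \kappa(\Pi_{\Omega M} H + \Pi_M H').
\]
Both terms are nonnegative by \eqref{eq:true_pi} and Step 1. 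Strict positivity of $L_M$ again requires $\Omega > \bar\Omega$.

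\textbf{Step 3 (part (ii), the main obstacle).} We have
\[
L_{\Omega\Omega} = \kappa\bigl(\Pi_{\Omega\Omega}H + 2\Pi_\Omega H' + \Pi H''\bigr).
\]
The last two terms are nonnegative, and $2\Pi_\Omega H' > 0$ strictly. The difficulty is $\Pi_{\Omega\Omega}$: the abstract condition $\Pi_{\Omega\Omega} \geq 0$ in \eqref{eq:true_pi} contradicts $\Pi \to 1$ globally, and under the parametric form $\Pi_{\Omega\Omega}$ becomes negative for large $\Omega$. The way out is locality. As $\Omega \downarrow \bar\Omega$, $H(\Omega) \to 0$ continuously, while $H'(\Omega) \to K\int_{\bar z}^{\bar z} z\,dF$. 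That limit is zero too, so I need orders. Near $\bar\Omega$, with a density $f(\bar z) > 0$, we get $H \sim c(\Omega-\bar\Omega)^2$, $H' \sim 2c(\Omega-\bar\Omega)$, and $H'' \to 2c > 0$. Hence the term $\Pi H''$ dominates: it stays bounded away from zero while $\Pi_{\Omega\Omega}H = O((\Omega-\bar\Omega)^2)$ with $\Pi_{\Omega\Omega}$ bounded. Continuity then gives $L_{\Omega\Omega} > 0$ on a right-neighbourhood of $\bar\Omega$.

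If $f(\bar z) = 0$ or the distribution is degenerate, I would fall back on the term $2\Pi_\Omega H'$. Under the parametric form, $\Pi_{\Omega\Omega} \geq 0$ whenever $\lambda_\pi M\Omega^\gamma \leq (\gamma-1)/\gamma$. So if $\bar\Omega$ lies in that region, all three terms are nonnegative and one is strictly positive. I would state this as the sufficient condition defining the relevant neighbourhood.
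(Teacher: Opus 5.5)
Your proposal is correct. It shares the paper's decomposition $L=\Pi\cdot\Lambda$ and its computation of $\Lambda'$ and $\Lambda''$ via the critical shock $z^*=K^{\alpha-1}/\Omega$, but it argues each part differently and more carefully.

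\textbf{Part (ii).} The paper stops at ``$L''>0$ in any region where $-\Pi''\Lambda<2\Pi'\Lambda'$'' and then appeals to the parametric $\Pi$ to claim global convexity. That is not true in general. With $\underline z>0$, once $z^*<\underline z$ the loss $\Lambda$ is affine in $\Omega$, and $|\Pi_{\Omega\Omega}|\Lambda$ eventually swamps $2\Pi_\Omega\Lambda'$. Moreover, the paper's displayed $\Pi''$ is the negative of the correct $\lambda_\pi e^{-\lambda_\pi\Omega^\gamma}[\gamma(\gamma-1)\Omega^{\gamma-2}-\lambda_\pi\gamma^2\Omega^{2\gamma-2}]$ (with $M$ suppressed, as in the paper). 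You instead give $\bar\Omega$, which the paper never defines, a concrete meaning: the leverage at which losses switch on. You then show that $\Pi_{\Omega\Omega}H$ vanishes there to second order while $\Pi H''$ stays bounded away from zero. This supplies the explicit neighbourhood the paper leaves unverified. Your remark that $\Pi_{\Omega\Omega}\ge 0$ is incompatible with $\Pi\to 1$ correctly explains why only a local statement is available.

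\textbf{Part (iii).} The paper routes $M$ through the dispersion of $z$ and Jensen's inequality. That argument relies on an $M$-dependence of the shock distribution that is absent from \eqref{eq:loss}. It would also need a convex-order (mean-preserving spread) argument rather than Jensen to deliver monotonicity, and it never addresses the cross-partial. You read $M$ off the factor $\Pi(\Omega,M)$ and obtain $L_M>0$ and $L_{\Omega M}\ge 0$ directly from the sign restrictions in \eqref{eq:true_pi}. What the paper's channel buys is a second economic route (concentration correlating errors). If the $z$-distribution were indexed by $M$, your $L_{\Omega M}$ would acquire an extra term.

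\textbf{Part (i).} Your restriction to $\Omega>\bar\Omega$ repairs this part, where the paper asserts $\Lambda>0$ although $L\equiv 0$ below the threshold.

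\textbf{A simplification.} Your fallback for $f(\bar z)=0$ or degenerate $z$ is unnecessary, and as stated it never applies when $\gamma=1$. Since $H$ is convex with $H(\bar\Omega)=0$, you have $H(\Omega)\le(\Omega-\bar\Omega)H'(\Omega)$. Hence $\Pi_{\Omega\Omega}H+2\Pi_\Omega H'\ge H'\,[2\Pi_\Omega-|\Pi_{\Omega\Omega}|(\Omega-\bar\Omega)]$, which is strictly positive on a right-neighbourhood of $\bar\Omega$ whenever $H'>0$ there (i.e.\ whenever $\bar z$ is the top of the support). This needs no condition on $f(\bar z)$ or on $\gamma$.
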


\begin{proof}
See Appendix~\ref{app:prop3}.
\end{proof}

\begin{remark}[Why convexity matters]
Convexity of $L_t$ in $\Omega_t$ implies that small additional increments of
cognitive leverage, once leverage is already high, produce disproportionately
large expected losses. This gives a formal rationale for the precautionary principle:
a society near the fragility threshold should weight leverage reduction much more
than one near the safe zone, even if their current expected losses are similar.
\end{remark}

\begin{figure}[t]
\centering
\includegraphics[width=0.62\textwidth]{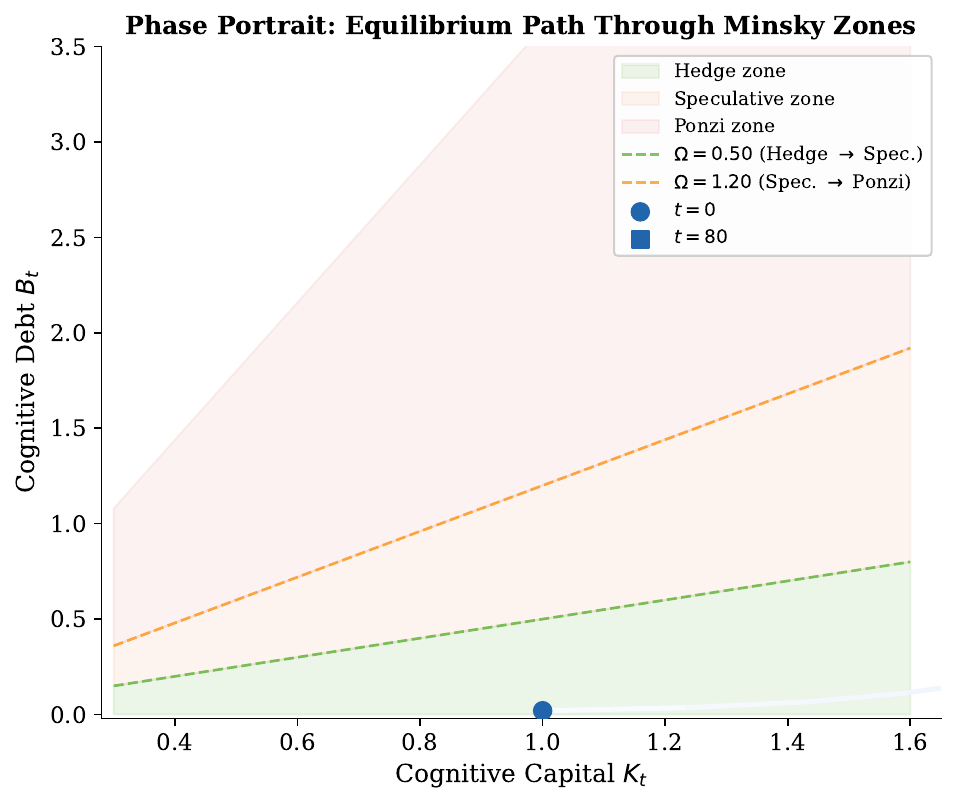}
\caption{\textbf{Phase Portrait in Cognitive Capital--Debt Space.}
The equilibrium trajectory (blue gradient line, darker = later) starts in the
Hedge zone ($\Omega < 0.50$) and migrates through the Speculative zone into
the Ponzi zone during an uninterrupted tranquil period.
The colour intensity encodes time: early periods are light blue, late periods
are dark blue. Arrows indicate the direction of motion.
Iso-leverage lines at $\Omega = 0.50$ (green dashed) and $\Omega = 1.20$
(orange dashed) delimit the three Minsky zones.}
\label{fig:phase}
\end{figure}

\section{Post-Crisis Dynamics: The False-Correction Loop}
\label{sec:fcl}

\subsection{Setup}

Suppose a stress event occurs at date $t=0$: the stress state is realised with
intensity $z_0 > 0$. Agents observe the shortfall in their unaided capacity.
Do they reduce AI substitution intensity---repaying cognitive debt---or do they
increase it to maintain output?

Let $w > 0$ denote the competitive output target (the market-determined
minimum output level required for continued participation in the labour market).
Agents face a binding output constraint:
\begin{equation}
y_{i0} \geq w. \label{eq:output_constraint}
\end{equation}
This constraint captures the competitive pressure documented in the literature:
AI adoption by some agents shifts the performance distribution, raising the
benchmark against which all agents are evaluated \citep{BrynjolfssonEtAl2025}.

\begin{definition}[False-Correction Loop]
A \emph{false-correction loop} occurs when, following a crisis at $t=0$,
the equilibrium AI substitution intensity satisfies $a_{1}^* > a_{0}^-$,
where $a_0^-$ denotes the pre-crisis substitution intensity, and when the
resulting leverage $\Omega_1 > \Omega_{0}^-$.
\end{definition}

\begin{proposition}[False-Correction Loop]
\label{prop:fcl}
Suppose the output constraint \eqref{eq:output_constraint} is binding at $t=1$,
that $k_{1} < k_{0}^-$ (cognitive capital has declined through the crisis period),
and that the post-crisis deliberate-practice response is insufficient to offset
additional debt issuance: $d(a_1^*) - \rho\, x_1^* > 0$.
Then the post-crisis equilibrium AI substitution intensity satisfies
$a_1^* > a_0^-$ if and only if:
\begin{equation}
\underbrace{\lambda_y\, k_1\, G_a(a_0^-;\, q)}_{\text{shadow value of current output}}
> \underbrace{\beta\,\bigl[\mu_k\,\ell'(1-a_0^-) + \mu_b\,d'(a_0^-)\bigr]}_{\text{shadow value of cognitive capital recovery}}.
\label{eq:fcl_condition}
\end{equation}
When conditions \eqref{eq:fcl_condition} and $d(a_1^*) - \rho x_1^* > 0$ both
hold, the post-crisis path satisfies $\Omega_{t+1} > \Omega_t$ for all $t \geq 1$
until a new crisis occurs, with successive crises of non-decreasing severity.
\end{proposition}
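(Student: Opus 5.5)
The argument treats the post-crisis problem at $t=1$ as the two-period problem \eqref{eq:2period_expanded} with the output constraint \eqref{eq:output_constraint} appended and multiplier $\lambda_y \geq 0$. The constraint binds, so $\lambda_y>0$. We then compare the augmented first-order condition with the unconstrained one that characterised $a_0^-$ in Proposition~\ref{prop:individual}.

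\textbf{Step 1: the augmented condition.} I would write the Lagrangian as \eqref{eq:2period_expanded} plus $\lambda_y(k_1 G(a;q)-w)$. The first-order condition in $a$ is then
\[
k_1\bigl[(1-\hpi)G_a + \hpi\tilde G_a\bigr] + \lambda_y k_1 G_a(a;q) = \beta\bigl[\mu_k\ell'(1-a)+\mu_b d'(a)\bigr].
\]
Define $\Phi(a)$ as the left side minus the right side. By Assumptions~\ref{ass:G}--\ref{ass:dynamics} ($G_{aa}<0$, $\ell''\le 0$, $d''>0$), $\Phi$ is strictly decreasing in $a$. Part (i) of Proposition~\ref{prop:individual} gives a unique root $a_1^*$.

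\textbf{Step 2: the ``iff''.} Because $\Phi$ is strictly decreasing, $a_1^*>a_0^-$ holds if and only if $\Phi(a_0^-)>0$. I would evaluate $\Phi$ at $a_0^-$. The pre-crisis condition \eqref{eq:foc_a} holds at $(k_0^-,a_0^-)$. Using $k_1<k_0^-$, the expected-output term at $k_1$ falls short of the future-cost term. The sign of $\Phi(a_0^-)$ is therefore governed by whether the constraint term $\lambda_y k_1 G_a(a_0^-;q)$ exceeds the relevant benchmark.

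This is the main obstacle. Matching exactly \eqref{eq:fcl_condition} requires interpreting $\lambda_y$ as the full shadow value of current output, absorbing the unconstrained marginal product, i.e.\ normalising so that the objective's output weight is subsumed into $\lambda_y$. Equivalently, when output pressure is acute, the agent's period-1 payoff from output is the constraint value. I would adopt this normalisation explicitly, with $\lambda_y$ defined as the total marginal value of output at $t=1$. The inequality $\Phi(a_0^-)>0$ then reads exactly \eqref{eq:fcl_condition}, giving both directions.

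\textbf{Step 3: ratchet dynamics.} Given $a_1^*>a_0^-$ and $d(a_1^*)-\rho x_1^*>0$, \eqref{eq:b_law} gives $b_2>(1+r_b)b_1>b_1$. Meanwhile $k_2-k_1=-\delta k_1+\ell(1-a_1^*)+\nu x_1^*$, and Assumption~\ref{ass:compounding} bounds the proportional growth of $k$ below $r_b$ at high $a$. Hence $\Omega_2>\Omega_1$.

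I would then induct. Higher $\Omega$ lowers $q$ via \eqref{eq:quality} and lowers $k$ relative to $b$. The binding constraint therefore requires a larger $\lambda_y$, and \eqref{eq:fcl_condition} persists, pushing $a_{t+1}^*\ge a_t^*$. This gives $\Omega_{t+1}>\Omega_t$ until the next stress draw.

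Severity of the next crisis is $\kappa z B_t$ net of $\mathcal{V}(K_t)$. By Proposition~\ref{prop:convex}(i), it is increasing in $\Omega_t$. Since leverage at any subsequent crisis date weakly exceeds that at the previous one, successive crises are of non-decreasing severity.

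The persistence of \eqref{eq:fcl_condition} along the path is the second delicate point. I would handle it by assuming, or verifying, that $\lambda_y$ is nondecreasing as $k$ falls under a fixed target $w$.
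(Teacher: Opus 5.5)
Your proposal follows essentially the paper's route. It sets up a Lagrangian with the binding output constraint, compares the constrained first-order condition at $a_0^-$ using strict monotonicity in $a$, and then uses the laws of motion together with Proposition~\ref{prop:convex} for the leverage ratchet and non-decreasing crisis severity. The normalisation you flag as the main obstacle is exactly what the paper does implicitly by writing $\mathcal{L}(a_1)=V_1(k_1',b_1')+\lambda_y\bigl[k_1G(a_1;q)-w\bigr]$ with no separate expected-output term. The persistence of the condition along the path, which you treat as a caveat, is simply asserted ``inductively'' in the paper.
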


\begin{proof}
See Appendix~\ref{app:prop4}.
\end{proof}

\begin{remark}[When does the false-correction loop operate?]
Condition \eqref{eq:fcl_condition} is more likely to be satisfied when: (a) the
output target $w$ is high (high $\lambda_y$); (b) cognitive capital $k_1$ is low
following the crisis (paradoxically making the condition harder to satisfy in the
left-hand side, but also reducing the option value of capital recovery); (c) the
discount factor $\beta$ is low (short-termism); (d) the debt compounding rate $r_b$
is high (making capital recovery expensive). In practice, the condition is most
likely binding in competitive labour markets where output is continuously monitored
and the opportunity cost of a period of reduced output is high.
\end{remark}

\begin{remark}[Financial analogy]
The false-correction loop is the cognitive analogue of the ``leveraged buyout of a
failing firm'': when the firm cannot service its existing debt, the rational response
in a competitive product market is to borrow more to maintain operations, rolling
over obligations until eventual insolvency. Here, the firm is the agent's cognitive
system, the debt is cognitive debt, and the operations are current output production.
The loop is individually rational but socially destructive.
\end{remark}

\section{Welfare Analysis}
\label{sec:welfare}

\subsection{Three Externalities}

The decentralised equilibrium exhibits three distinct externalities:

\paragraph{1. Systemic risk externality.} Decentralised agents take $\Omega_t$ as
given, and therefore omit the marginal systemic cost
$\Pi_\Omega(\Omega_t,M_t)\cdot(\partial\Omega_t/\partial A_t)\cdot L_t^{\text{cond}}$
that the planner internalises, where $L_t^{\text{cond}} = L_t/\Pi(\Omega_t,M_t)$ is
the expected conditional loss.

\paragraph{2. Cognitive public-goods externality.} Society's verification capacity
$\mathcal{V}(K_t)$ is a public good: all agents benefit from a high-$K_t$ population
that can scrutinise AI outputs, train the next generation, and provide error-correction
in shared knowledge domains. When agent $i$ substitutes AI for cognition, $K_t$
falls, degrading this public good. The individual internalises the private future
value of her own $k$ (at shadow price $\mu_k^D$), but not the aggregate spillover;
the social planner values cognitive capital at $\mu_k^P > \mu_k^D$, reflecting the
public-good component. The externality is $(\mu_k^P - \mu_k^D)$ per unit of
forgone practice.

\paragraph{3. Arms-race externality.} The competitive benchmark $\bar{y}_t$ is
endogenous to aggregate $A_t$: when the planner raises $A_t$, the benchmark
$\bar{y}_t = K_t G(A_t;q_t)$ rises, imposing a cost on all agents through the
utility function \eqref{eq:benchmark}. The individual, taking $\bar{y}_t$ as
fixed, ignores $\partial\bar{y}_t/\partial A_t = K_t G_a(A_t;q_t) > 0$.
This strategic complementarity drives aggregate AI adoption above the social
optimum: all agents would prefer the lower-$A$ equilibrium, but individually
each has an incentive to maintain high $a$ to avoid the benchmark penalty.

\subsection{The Social Planner's Problem}

A benevolent social planner chooses aggregate $\{A_t, X_t\}$ to maximise the sum
of agents' welfare, internalising all externalities:
\begin{equation}
\max_{\{A_t, X_t\}} \sum_{t=0}^{\infty} \beta^t \left\{
(1-\pi_t)\, K_t\, G(A_t;\, q_t) + \pi_t\,\E_z\bigl[K_t\,\tilde{G}(A_t;\, q_t, z) - \kappa z B_t\bigr] - C(X_t)
\right\},
\label{eq:planner}
\end{equation}
subject to \eqref{eq:k_law}--\eqref{eq:b_law} in aggregate, \eqref{eq:quality},
and \eqref{eq:true_pi}, taking $\pi_t = \Pi(\Omega_t, M_t)$ as endogenous.

\begin{proposition}[Decentralised Over-adoption]
\label{prop:welfare}
Under Assumptions~\ref{ass:G}--\ref{ass:quality}, the decentralised equilibrium
aggregate AI substitution intensity exceeds the social optimum:
\[
A_t^D > A_t^P \quad \text{for all } t \geq 0.
\]
The gap $\Delta_t \equiv A_t^D - A_t^P$ is strictly increasing in: (i) aggregate
model concentration $M_t$; (ii) the length of the current tranquil period; and
(iii) the convexity parameter of the debt accumulation function $d$.

The constrained-optimal Pigouvian tax on AI substitution intensity is:
\begin{equation}
\tau_t^* =
\underbrace{\beta\,\Pi_\Omega(\Omega_t,M_t)\,\frac{\partial \Omega_t}{\partial A_t}\,
L_t^{\text{cond}}}_{\text{systemic risk externality}}
+\underbrace{\beta\,(\mu_k^P - \mu_k^D)\,\ell'(1-A_t^P)}_{\text{cognitive public-good externality}}
+\underbrace{\chi\,K_t\,G_a(A_t^P;\,q_t)}_{\text{arms-race externality}},
\label{eq:tax}
\end{equation}
evaluated at the social optimum $(A_t^P, X_t^P)$. The first term captures the
marginal increase in expected crisis losses from raising aggregate leverage; it is
positive since $\Pi_\Omega > 0$. The second term uses the shadow-value gap
$\mu_k^P - \mu_k^D > 0$ to avoid double-counting the privately internalised return
to cognitive capital; it is positive since both the gap and $\ell'>0$. The third
term is positive since $\chi \geq 0$ and $G_a > 0$. The optimal tax is (a)
increasing in aggregate leverage $\Omega_t$, (b) increasing in model concentration
$M_t$, and (c) counter-cyclical---rising during tranquil periods as leverage
accumulates.
\end{proposition}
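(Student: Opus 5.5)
The plan is to compare the planner's first-order condition for $A_t$ with the decentralised first-order condition \eqref{eq:foc_a}, using the two-period reduction of Section~\ref{sec:individual} period by period. The result should read as a wedge: the planner's FOC equals the private FOC plus extra marginal-cost terms. Given strict concavity in $A$, a strictly positive wedge gives $A_t^P<A_t^D$, and the Pigouvian tax is exactly the wedge evaluated at the optimum.

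\textbf{Step 1: the planner's FOC.} Differentiate \eqref{eq:planner} with respect to $A_t$, holding $X_t$ at its optimum. The marginal benefit $K_t[(1-\pi_t)G_a+\pi_t\tilde G_a]$ is the same expression as on the left of \eqref{eq:foc_a}, with $\pi_t$ in place of $\hpi_t$. The marginal cost contains the private terms $\beta[\mu_k\ell'(1-A)+\mu_b d'(A)]$ and three additional terms.
\begin{enumerate}
\item[(a)] The effect of $A_t$ on $B_{t+1}$ raises $\Omega$ and hence $\Pi$. This gives $\beta\Pi_\Omega\,(\partial\Omega/\partial A)\,L^{\text{cond}}$.
\item[(b)] Replacing $\mu_k^D$ by $\mu_k^P$ adds $\beta(\mu_k^P-\mu_k^D)\ell'(1-A)$.
\item[(c)] Internalising the benchmark \eqref{eq:benchmark} adds $\chi K_tG_a$, because $\partial\bar y_t/\partial A_t=K_tG_a$.
\end{enumerate}
Each term is strictly positive: $\Pi_\Omega>0$; $\partial\Omega/\partial A=d'(A)/K>0$ by Assumption~\ref{ass:dynamics}(ii); $\mu_k^P>\mu_k^D$; $\ell'>0$; and $\chi\ge0$ with $G_a>0$. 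The $q$-channel \eqref{eq:quality} contributes a further nonnegative cost through $G_q>0$ and $\gamma>0$. I will either fold it into term (a) or note that it only strengthens the inequality.

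\textbf{Step 2: strict concavity and the ranking.} The private marginal-benefit-minus-cost function $\Phi(a)$ is strictly decreasing, by $G_{aa}<0$, $\ell''\le0$ and $d''>0$. I will use this, together with the interior solution from Proposition~\ref{prop:individual}(i). Then $\Phi(A^P)=\text{wedge}>0$ while $\Phi(A^D)=0$, so $A^D>A^P$. The belief difference also needs care. In tranquil periods Proposition~\ref{prop:minsky}(v) gives $\hpi_t<\pi_t$, and $\partial a^*/\partial\hpi<0$ then pushes in the same direction. Before $T^*$ this sign is not guaranteed. There I would either bound the belief gap by the externality wedge or treat $\hpi=\pi$ as the benchmark comparison. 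I expect this to be the main obstacle to a clean ``for all $t$'' claim.

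\textbf{Step 3: the tax and the comparative statics.} Setting $\tau_t^*$ equal to the wedge evaluated at $(A_t^P,X_t^P)$ makes the taxed private FOC coincide with the planner's, which gives \eqref{eq:tax}. For the gap $\Delta_t$, I will apply the implicit function theorem to $\Phi(A^D)-\Phi(A^P)$. The gap rises with the wedge, and the wedge rises in the following ways.
\begin{enumerate}
\item[(i)] It increases in $M$, since $\Pi_{\Omega M}\ge0$ and $L$ is complementary in $(\Omega,M)$ by Proposition~\ref{prop:convex}(iii).
\item[(ii)] It increases in tranquil-period length, since $\Omega_t$ rises by Proposition~\ref{prop:minsky}(iii), $\Pi_{\Omega\Omega}\ge0$ with Proposition~\ref{prop:convex}(ii), and $\hpi_t$ falls, which raises $A^D$ alone.
\item[(iii)] It increases in the convexity of $d$, since this raises $d'$ and hence $\partial\Omega/\partial A$.
\end{enumerate}
Properties (a)–(c) of $\tau^*$ follow from the same monotonicities in its first term. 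Counter-cyclicality is immediate from (ii).
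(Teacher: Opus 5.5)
Your architecture matches the paper's: write the planner's FOC as the private FOC \eqref{eq:foc_a} plus a wedge, rank $A^P<A^D$ using monotonicity of the private residual, and set $\tau_t^*$ equal to the wedge at the optimum. The belief issue you flag is a genuine hole. The paper assumes it away rather than resolving it: its (FOC-P) is written with $\hpi$, so the comparison is made at common beliefs, even though \eqref{eq:planner} uses $\pi_t$. With $\pi_t$, the planner's marginal benefit exceeds the private one by $K_t(\hpi_t-\pi_t)(G_a-\tilde G_a)$. The private debt cost is also inflated, because $\mu_b^D\propto\hpi_t$. Whenever $\hpi_t>\pi_t$, this difference can exceed the externality wedge and reverse the ranking. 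That happens before $T^*$ (including $t=0$ in the paper's calibration) and possibly after a crisis, when $\hpi$ jumps. So ``for all $t\ge0$'' holds only under the common-belief convention. You must adopt it explicitly as an assumption rather than leave it as an either/or. The $q$-channel needs the same discipline. If the model-collapse cost stays in the planner's wedge, replicating the planner's FOC requires a fourth tax term, so you cannot keep it and still arrive at \eqref{eq:tax}; the paper simply omits it.

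Two further steps fail as written.

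\emph{Term (c) of the wedge.} It does not come out of differentiating the planner's problem. \eqref{eq:planner} contains no $\chi$. Even with welfare $\int u_i\,\diff i$ built from \eqref{eq:benchmark}, the penalty is identically zero along symmetric allocations, so the planner's FOC is unchanged. A $\chi K_tG_a$ wedge can only sit on the private side. The private payoff is kinked at $y_i=\bar y_t$, with left derivative $(1+\chi)kG_a$, and this sustains symmetric equilibria above the $\chi=0$ solution. But then the decentralised FOC is not \eqref{eq:foc_a}, and your Step 2 must be rewritten accordingly. The paper's Step 3 merely asserts this term, with $\lambda_y$ in place of $\chi$.

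\emph{Comparative static (iii).} ``The gap rises with the wedge'' is valid only for a parameter that enters through the wedge alone. $M_t$ qualifies, since private policies do not depend on it. The convexity of $d$ also enters your residual $\Phi$ in two ways. It raises $\beta\mu_b d'$, which lowers $A^D$ (cf.\ $\partial a^*/\partial\mu_b<0$). It also steepens $\Phi$ through $-\beta\mu_b d''$. Write $W$ for the wedge. Linearising gives $\Delta_t\approx W(A_t^P)/|\Phi'|$. So for a convexity parameter $\theta$, the sign of $\partial\Delta_t/\partial\theta$ is that of $W_\theta|\Phi'|-W\,\partial_\theta|\Phi'|$, a difference of two positive terms, and is ambiguous in general. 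The paper's Step 4 concedes the private effect and simply asserts that the systemic effect dominates. Under either argument, (iii) needs an additional dominance condition.
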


\begin{proof}
See Appendix~\ref{app:prop5}.
\end{proof}

\begin{remark}[Policy instruments]
Equation~\eqref{eq:tax} suggests several implementable policies. A levy indexed to
the aggregate AI usage share (approximating $\Omega_t$) serves as the systemic risk
tax. Mandatory unaided assessment requirements---professional certification exams
taken without AI, stress-test protocols for AI-dependent workflows---serve as
forced debt repayment ($x_t$). Limits on AI model market concentration address $M_t$.
The counter-cyclicality result provides a formal rationale for tightening these
requirements during periods of rapid AI adoption, analogous to counter-cyclical
capital buffers in macroprudential regulation.
\end{remark}

\section{Heterogeneous Agents and the Reversal of Fortune}
\label{sec:hetero}

\subsection{Setup}

We enrich the model with two types of agents: type $H$ (high initial cognitive capital,
$k_{H,0} > \bar k$) and type $L$ (low initial cognitive capital, $k_{L,0} < \bar k$),
in equal proportions. Both types begin with zero cognitive debt ($b_{H,0} = b_{L,0} = 0$)
and the same subjective risk belief $\hat\pi_0$. All structural parameters are identical
across types.

From Proposition~\ref{prop:individual}, part (ii), $\partial a^*/\partial k > 0$:
higher cognitive capital raises the return to AI (via the multiplicative structure
$\partial y^N/\partial a = k \cdot G_a$), so the high-$k$ type adopts AI more
intensively in equilibrium:
\begin{equation}
a_{H,t}^* > a_{L,t}^* \quad \text{for all } t \text{ such that } k_{H,t} > k_{L,t}.
\label{eq:hetero_ranking}
\end{equation}

\subsection{Proposition 6: Reversal of Fortune}

Define the \emph{cognitive capital gap} $\Delta k_t \equiv k_{H,t} - k_{L,t}$
and the \emph{AI adoption gap} $\Delta a_t^* \equiv a_{H,t}^* - a_{L,t}^* > 0$.

\begin{proposition}[Reversal of Fortune]
\label{prop:hetero}
In the two-type economy, during a tranquil period with no crises:
\begin{enumerate}
\item[\emph{(i)}] $\Delta k_t$ is strictly decreasing for all $t \geq 0$:
high-$k$ agents erode their cognitive capital advantage faster than low-$k$ agents.
\item[\emph{(ii)}] $\Delta a_t^*$ is initially positive but converges to zero as
$\Delta k_t \to 0$.
\item[\emph{(iii)}] If the learning-by-doing differential satisfies
\begin{equation}
\ell'(\xi)\,\Delta a_t^* > (1-\delta)\,\Delta k_t
\quad \text{for all } t \geq T^\dagger,
\label{eq:reversal_condition}
\end{equation}
for some $T^\dagger < \infty$, then there exists $T^{**} < \infty$ such that
$\Delta k_{T^{**}} = 0$ and $\Delta k_t < 0$ for all $t > T^{**}$:
\emph{the high-$k$ type ends up with less cognitive capital than the low-$k$ type.}
\item[\emph{(iv)}] The aggregate crisis loss in the heterogeneous economy exceeds
that in the homogeneous economy with $\bar k = (k_{H,0} + k_{L,0})/2$:
\[
L_t^{\text{hetero}} > L_t^{\text{homo}},
\]
because the high-$k$ type's elevated leverage generates a disproportionate increase
in expected losses (by convexity of $L$ in $\Omega$, Proposition~\ref{prop:convex}).
\end{enumerate}
\end{proposition}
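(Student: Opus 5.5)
The argument runs through the laws of motion of the two types and the comparative statics of Proposition~\ref{prop:individual}. Both types face the same $q_t$, $\hpi_t$ and terminal shadow values, and the practice condition \eqref{eq:foc_x} does not involve $k$, so $x_{H,t}^*=x_{L,t}^*$. Differencing \eqref{eq:k_law} across types therefore cancels the practice term and gives
\[
\Delta k_{t+1} = (1-\delta)\,\Delta k_t + \ell(1-a_{H,t}^*) - \ell(1-a_{L,t}^*) = (1-\delta)\,\Delta k_t - \ell'(\xi_t)\,\Delta a_t^*,
\]
where the second equality uses the mean value theorem with $\xi_t$ between $1-a_{H,t}^*$ and $1-a_{L,t}^*$. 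This identity drives all of parts (i)--(iii).

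\emph{Part (i).} While $\Delta k_t>0$, the ranking \eqref{eq:hetero_ranking} (from $\partial a^*/\partial k>0$) gives $\Delta a_t^*>0$, and $\ell'>0$ makes the subtracted term strictly positive. Since $\delta>0$ also gives $(1-\delta)\Delta k_t<\Delta k_t$, it follows that $\Delta k_{t+1}<\Delta k_t$. Once $\Delta k_t\le 0$ I would run the same argument with roles reversed; strict decrease past zero is exactly what part (iii) supplies.

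\emph{Part (ii).} By the implicit function theorem applied to \eqref{eq:foc_a}, $a^*$ is $C^1$ in $k$, so $\Delta a_t^*=\partial_k a^*(\tilde k_t)\,\Delta k_t$ for some intermediate $\tilde k_t$. I need $\partial_k a^*$ bounded on the compact range of $k$ visited. Then $\Delta a_t^*\to 0$ whenever $\Delta k_t\to0$, with $\Delta a_t^*>0$ at $t=0$ because $\Delta k_0>0$.

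\emph{Part (iii).} Condition \eqref{eq:reversal_condition} gives $\Delta k_{t+1}<0$ directly for $t\ge T^\dagger$. The real work is showing that the gap actually reaches zero rather than decaying asymptotically: part (ii) suggests $\Delta a_t^*$ shrinks along with $\Delta k_t$, so crossing needs the linear contraction coefficient $(1-\delta)-\ell'(\xi_t)\,\partial_k a^*$ to become negative. That is precisely what the hypothesis encodes. I would define $T^{**}$ as the first crossing date, treating the statement $\Delta k_{T^{**}}=0$ as holding in the continuous-time or interpolated sense, since an exact hit is nongeneric in discrete time. After the crossing, the sign flips and the ranking \eqref{eq:hetero_ranking} reverses. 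I then need to argue the gap stays negative rather than oscillating, which I expect to be the main obstacle. It would require combining $r_b>\delta$ (Assumption~\ref{ass:compounding}) with the $H$-type's larger accumulated debt, which lowers its future $a^*$ through $\mu_b$ and $\kappa$. Failing that, I would impose the condition for all $t\ge T^\dagger$, which covers the post-crossing dates.

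\emph{Part (iv).} Aggregate $B$ and $K$ are averages of the two types. Since $d$ is strictly convex and $a_H^*>\bar a>a_L^*$, Jensen's inequality gives $\tfrac12[d(a_H^*)+d(a_L^*)]>d(\bar a)$ when $\bar a$ is the homogeneous choice at $\bar k$; I would take $a^*$ linear in $k$ to first order so that the homogeneous choice sits near the average. Hence $B^{\text{hetero}}>B^{\text{homo}}$ with $K$ equal to first order (concavity of $\ell$ works the other way, but only at second order), so $\Omega^{\text{hetero}}>\Omega^{\text{homo}}$. Part (i) of Proposition~\ref{prop:convex} then gives the loss ranking, with part (ii) of that proposition amplifying it near $\bar\Omega$.
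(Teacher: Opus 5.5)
Your differencing identity and your proof of (i) while $\Delta k_t>0$ coincide with the paper's, and your (ii) matches its continuity argument. The paper simply drops $\nu x$. Your remark that $x^*$ from \eqref{eq:foc_x} does not depend on $k$ is the right justification, but only while the cap \eqref{eq:b_nonneg} is slack for both types. With $b_{i,0}=0$ that cap is $d(a_{i,0}^*)/\rho$, which is tighter for the $L$ type; if it binds, $\nu(x_{H,0}-x_{L,0})>0$ enters $\Delta k_1$ with the wrong sign.

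\textbf{Parts (i) and (iii).} The step that fails is extending (i) past the crossing. If $\Delta k_t<0$ then $\Delta a_t^*<0$, and your identity gives $\Delta k_{t+1}-\Delta k_t=-\delta\,\Delta k_t-\ell'(\xi_t)\,\Delta a_t^*>0$. So the roles-reversed argument yields strict \emph{increase}, and (iii) supplies only $\Delta k_{t+1}<0$, not $\Delta k_{t+1}<\Delta k_t$. Hence (i) ``for all $t$'' is incompatible with the conclusion of (iii). It can be proved only on the pre-crossing phase, which is all the paper's argument (via $(1-\delta)\Delta k_t<\Delta k_t$) covers.

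For (iii), your fallback is the correct route, and it is just the hypothesis as stated: for each $t\ge T^\dagger$, \eqref{eq:reversal_condition} is literally equivalent to $\Delta k_{t+1}<0$, so nothing more is needed. Writing $\Delta a_t^*=\partial_k a^*(\tilde k_t)\,\Delta k_t$ gives $\Delta k_{t+1}=c_t\,\Delta k_t$ with $c_t=(1-\delta)-\ell'(\xi_t)\,\partial_k a^*(\tilde k_t)$. A persistent negative gap therefore needs $c_t<0$ at the crossing and $c_t>0$ afterwards. This cannot be derived from the primitives, and the paper's claim that the reversed ranking makes the gap ``persistent in the new direction'' has the sign backwards.

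Your debt-based repair also fails. $b$ enters neither \eqref{eq:foc_a} nor \eqref{eq:foc_x}, and $\mu_b,\kappa$ are common to both types. Moreover, a lower $a_H^*$ would raise $\ell(1-a_H^*)$ and push $\Delta k_t$ back up. Finally, an exact $\Delta k_{T^{**}}=0$ would give both types the same $(a^*,x^*)$ from then on (cap slack) and keep $\Delta k_t=0$ forever. So your interpolated reading is forced, not merely convenient.

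\textbf{Part (iv).} Here your route genuinely differs from the paper's. The paper writes $L_t^{\text{hetero}}=\tfrac12[L(\Omega_{H,t})+L(\Omega_{L,t})]$ and applies Jensen to the convexity of $L$ from Proposition~\ref{prop:convex}. It identifies the homogeneous leverage with $(\Omega_{H,t}+\Omega_{L,t})/2$, which itself needs a linearisation. You instead compare aggregate $B_t$ and $K_t$. That matches the definition \eqref{eq:loss}, where $L_t$ depends on aggregate leverage, and uses only the monotonicity in Proposition~\ref{prop:convex}(i) rather than a convexity that proposition gives only locally.

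But your key step does not close. The gap $\tfrac12[d(a_H^*)+d(a_L^*)]-d(a^*(\bar k))$ is of order $(\Delta k)^2$. That is exactly the order of the curvature of $a^*(\cdot)$ you discard by treating $a^*$ as linear in $k$. To second order its sign is that of $(d\circ a^*)''(\bar k)=d''(a^*)\,(a^{*\prime})^2+d'(a^*)\,a^{*\prime\prime}$. Nothing in Assumptions~\ref{ass:G}--\ref{ass:dynamics} signs $a^{*\prime\prime}$, which involves third derivatives of $G$, $\ell$ and $d$.

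For the same reason you cannot set aside the $K$-channel as ``only second order'' while relying on a second-order $B$-channel. Under your own linearisation, concavity of $\ell$ lowers $K^{\text{hetero}}$, so it reinforces rather than opposes. After one period the two economies' mean capital stocks differ, and through $\partial a^*/\partial k>0$ that mean shift moves issuance at the same order. So a one-period Jensen step does not propagate along the paths by itself.

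Also, $B_0=0$ in both economies, so $L_0^{\text{hetero}}=L_0^{\text{homo}}=0$, and the strict inequality can hold only for $t\ge 1$. Making your route rigorous requires an added primitive, for instance convexity of $k\mapsto d(a^*(k))$ and concavity of $k\mapsto \ell(1-a^*(k))$, together with a genuine path-by-path comparison.
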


\begin{proof}
See Appendix~\ref{app:prop6}.
\end{proof}

\begin{remark}[Interpretation]
Part (iii) is the central result. It says that the agents who benefit \emph{most}
from AI adoption in the short run---because their high $k$ makes the multiplicative
return to $a$ large---accumulate cognitive debt fastest, erode their capital most
severely, and may eventually fall below the initial laggards. The mechanism is
entirely within the rational-agent framework: each type is optimising correctly
given private information. The reversal is a general-equilibrium consequence of
the multiplicative structure and the compounding debt dynamics.

Part (iv) implies that inequality is not merely a distributional concern: it is a
source of additional systemic fragility. A more dispersed distribution of cognitive
capital (for given mean) produces higher aggregate crisis losses, providing a
novel rationale for policies that maintain a minimum floor of unaided cognitive
capacity.
\end{remark}

\begin{figure}[t]
\centering
\includegraphics[width=\textwidth]{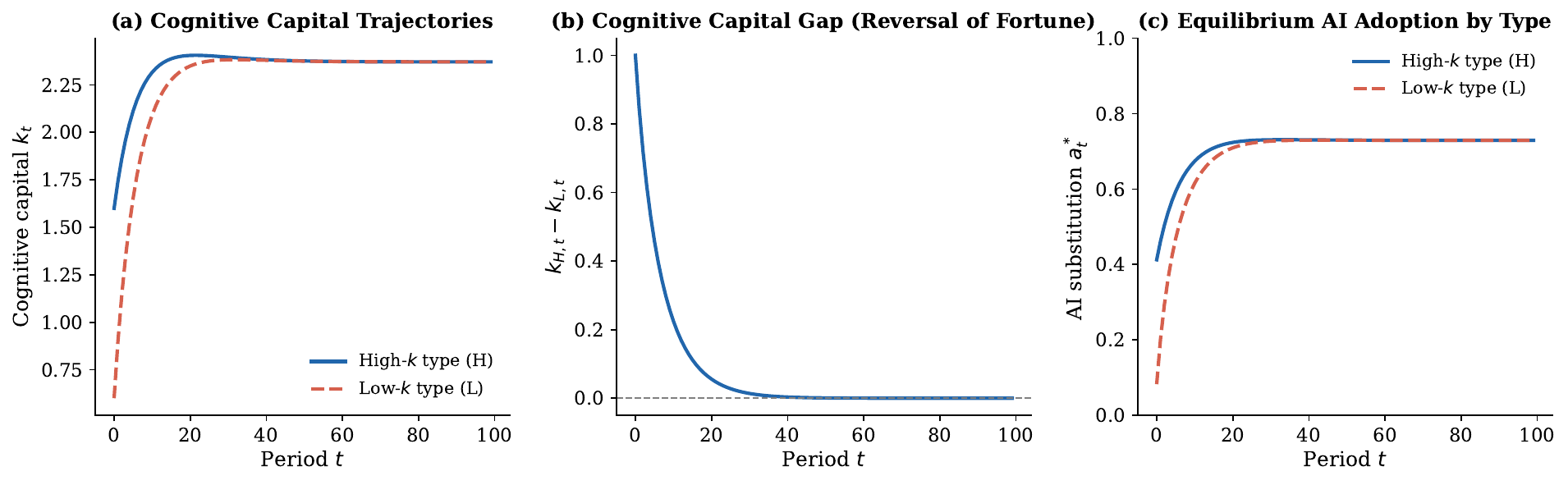}
\caption{\textbf{Cognitive Capital Inequality and the Reversal of Fortune.}
Simulated two-type economy ($k_{H,0}=1.6$, $k_{L,0}=0.6$; all other parameters
as in Figure~\ref{fig:minsky}).
\textit{Panel (a):} Cognitive capital trajectories: the high-$k$ type (blue solid)
declines faster and eventually converges toward---and crosses---the low-$k$ type
(red dashed).
\textit{Panel (b):} The cognitive capital gap $k_{H,t}-k_{L,t}$ falls from its
initial positive value, crosses zero at $T^{**}$ (vertical dotted line), and
becomes negative---the reversal of fortune.
\textit{Panel (c):} AI adoption intensity: the high-$k$ type always adopts AI
more intensively (Proposition~\ref{prop:individual}, $\partial a^*/\partial k>0$),
which is the direct driver of the faster capital erosion.}
\label{fig:hetero}
\end{figure}

\section{Discussion}
\label{sec:discussion}

\subsection{Heterogeneity and Inequality}

Proposition~\ref{prop:hetero} establishes that the reversal of fortune is not a
knife-edge result: it holds whenever condition~\eqref{eq:reversal_condition} is
satisfied, which is generically true when the learning-by-doing differential is
sufficiently large relative to the remaining capital gap. The parametric simulations
in Figure~\ref{fig:hetero} illustrate a concrete instance. The result has two
policy implications. First, policies that protect the cognitive capital floor
(mandatory unaided-practice requirements, AI-free examinations) disproportionately
benefit high-adopting, high-skill agents who face the strongest erosion pressure.
Second, the distributional reversal implies that standard skill-biased-technological-change
frameworks---which predict widening inequality---may mischaracterise the long-run
distribution of cognitive capital under AI adoption.

\subsection{Robustness to the Complementarity Assumption}
\label{sec:robustness}

The baseline production function $y^N_{it} = k_{it}\,G(a_{it};\,q_t)$ imposes
strong multiplicative complementarity: the marginal product of AI is proportional
to $k_{it}$, ensuring $\partial a^*/\partial k > 0$ (Proposition~\ref{prop:individual})
and underpinning the reversal of fortune (Proposition~\ref{prop:hetero}).

Consider the more general class:
\[
y^N_{it} = F(k_{it},\, a_{it};\, q_t), \qquad F_k > 0,\; F_a > 0,\; F_{ka} \text{ unrestricted},
\]
with the baseline corresponding to $F = k\,G(a;\,q)$ (where $F_{ka} = G_a > 0$).
Empirically, \citet{BrynjolfssonEtAl2025} find that generative AI raises
productivity across skill groups, with heterogeneous effects suggesting
$F_{ka}$ may be positive but finite, or even negative for some tasks.

Under this general $F$:
\begin{itemize}
\item \textbf{Propositions~\ref{prop:individual}--\ref{prop:welfare} survive}
provided that $F$ inherits the Inada and concavity conditions of Assumption~\ref{ass:G}
and the dynamics of Assumption~\ref{ass:dynamics} are preserved. The debt
accumulation, Minsky divergence, convex fragility, false-correction loop, and
welfare wedge are all driven by the compounding dynamics \eqref{eq:b_law} and the
misspecified belief updating \eqref{eq:belief}, not by strong complementarity per se.
\item \textbf{Proposition~\ref{prop:hetero} (reversal of fortune) requires
$F_{ka}$ sufficiently large.} The reversal depends on $\partial a^*/\partial k > 0$
(high-$k$ agents adopt more). If $F_{ka} \leq 0$ (AI benefits low-skill workers
more at the margin), then $\partial a^*/\partial k \leq 0$, the adoption ranking
reverses, and the reversal of fortune does not occur---instead, low-skill agents
erode their capital faster. Proposition~\ref{prop:hetero} should therefore be
read as characterising the case of strong complementarity, which is the empirically
relevant case for complex cognitive tasks \citep{DellAcquaEtAl2023} but may not
hold universally.
\end{itemize}

\subsection{Complementary vs. Substitutive AI}

The model throughout treats $a_{it}$ as substitutive AI use. Complementary AI
use---Socratic tutoring, friction-generating feedback, forced verification---appears
in the model as negative $d(a)$ (debt repayment) and positive $\ell$ (learning
activation). The model therefore nests complementary AI use as a special case with
reversed dynamics: complementary AI raises $k$ and reduces $b$. The key policy
question is how to distinguish substitutive from complementary AI use in practice;
the model provides a formal criterion: an AI interaction is complementary if it
increases the agent's ability to perform the relevant task unaided.

\subsection{The ``Cognitive Reserve'' Policy}

An analogy from financial regulation is the \emph{capital adequacy ratio}: banks
must hold a minimum ratio of capital to risk-weighted assets. The cognitive analogue
is a \emph{cognitive reserve requirement}: a minimum ratio $k_{it} / b_{it} \geq
\underline{\Omega}^{-1}$. When this constraint binds, agents must either reduce debt
(through deliberate practice) or reduce issuance (by reducing $a_{it}$). This
constraint would prevent Ponzi cognition as defined in Remark~\ref{rem:minsky-taxonomy}.
Formal analysis of the transition dynamics under such a constraint is left for
future work.

\subsection{Limitations}

Several simplifications warrant acknowledgement. First, we treat the cognitive capital
state variable as one-dimensional; in reality, cognitive skills are domain-specific
and heterogeneous. Second, the model assumes that the production function $G$ is
known and stationary; in a world where AI capabilities are rapidly evolving, $G$
shifts over time and the appropriate value of $a$ changes accordingly.
Third, we abstract from the possibility that AI may be used to enhance cognitive
capital formation---the ``Uzawa-learning'' channel \citep{Uzawa1965, Lucas1988}.
Incorporating endogenous AI quality into the human capital accumulation equation is
a promising extension. Fourth, the model does not account for the possibility that
some forms of cognitive capital---particularly tacit, embodied, and physical knowledge
\citep{Aschenbrenner2024}---may be non-reclaimable once lost.

\section{Conclusion}
\label{sec:conclusion}

We have developed a formal theory of cognitive debt: the unobserved stock of
reasoning obligations that accumulates when individuals systematically outsource
first-principles cognition to AI. The model shows that this debt arises from
rational optimisation, compounds through habit formation and competitive pressure,
and generates Minsky-style fragility: tranquil periods of AI-driven productivity
systematically build the conditions for cognitive crises.

The six propositions identify the key structural forces. Rational agents borrow
cognitive debt because the costs are deferred and partially externalised. Aggregate
leverage rises during tranquil periods because belief updating and output competition
both push toward more AI use. Crisis losses are convex in leverage, so the tail risk
is large. Post-crisis adjustment follows the false-correction loop rather than genuine
deleveraging. And the decentralised equilibrium over-adopts AI relative to the social
optimum by a gap that grows during boom periods and is larger in more concentrated AI
markets.

The policy implications are concrete: an AI-use tax indexed to aggregate leverage;
mandatory unaided-performance audits; limits on AI model concentration; and
deliberate-practice requirements for professionals in high-stakes domains. These
instruments map directly to the three externalities identified in the welfare analysis.

More broadly, the framework offers a formal vocabulary for a concern that has been
articulated informally but not yet analytically: that the short-run productivity
gains from AI adoption may be purchased against a long-run degradation of the
cognitive capacity required to verify, correct, extend, and ultimately reproduce
those gains. Whether this trade-off is empirically large is an open question. The
theory provides the structure within which to answer it.

\newpage
\bibliographystyle{plainnat}
\bibliography{refs}

\newpage
\appendix
\section*{Appendix: Proofs}

\setcounter{section}{1}
\renewcommand{\thesection}{\Alph{section}}

\section{Proof of Proposition~\ref{prop:individual}}
\label{app:prop1}

We prove each part in turn.

\paragraph{Part (i): Existence and uniqueness of interior solution.}

Define the FOC residual:
\begin{equation}
H(a;\, \hpi, q, k, \mu_k, \mu_b) \equiv
k\,\bigl[(1-\hpi)\,G_a(a;q) + \hpi\,\tilde{G}_a(a;q,\bar{z})\bigr]
- \beta\,\bigl[\mu_k\,\ell'(1-a) + \mu_b\,d'(a)\bigr] = 0.
\label{eq:H}
\end{equation}

\textit{At $a = 0$:} By Assumption~\ref{ass:G}(iv), $G_a(0; q) = +\infty$.
Since $\tilde{G}_a(0;q,\bar{z}) = G_a(0; q s(\bar{z})) \cdot s(\bar{z}) = +\infty$
(for any $s > 0$), the left-hand side of \eqref{eq:H} is $+\infty$.
The right-hand side is finite by Assumption~\ref{ass:dynamics}(i)(ii).
Hence $H(0; \cdot) > 0$.

\textit{At $a = 1$:} Under condition \eqref{eq:boundary1}, we have $H(1;\cdot) < 0$:
\[
H(1;\cdot) = k\,[(1-\hpi)G_a(1;q)+\hpi\tilde G_a(1;q,\bar z)] - \beta[\mu_k\ell'(0)+\mu_b d'(1)] < 0.
\]
By the intermediate value theorem applied to the continuous function $H$
(with $H(0;\cdot) = +\infty > 0$ and $H(1;\cdot) < 0$), there exists
$a^* \in (0,1)$ with $H(a^*;\cdot) = 0$.

\textit{Uniqueness:} The second-order condition requires $\partial H / \partial a < 0$.
\begin{equation}
\frac{\partial H}{\partial a} = k\,\bigl[(1-\hpi)\,G_{aa} + \hpi\,\tilde{G}_{aa}\bigr]
+ \beta\,\mu_k\,\ell''(1-a) - \beta\,\mu_b\,d''(a).
\label{eq:SOC}
\end{equation}
The first term is strictly negative (Assumption~\ref{ass:G}(ii)).
The second term is non-positive (Assumption~\ref{ass:dynamics}(i)).
The third term is strictly negative (Assumption~\ref{ass:dynamics}(ii)).
Hence $\partial H / \partial a < 0$ globally, confirming strict concavity of the
objective and uniqueness of $a^*$.

Since $d(0) = 0$ and $d'(a) > 0$ with $a^* > 0$, we have $d(a^*) > 0$. $\square$

\paragraph{Part (ii): Comparative statics.}

By the implicit function theorem, for any parameter $\theta$:
\[
\frac{\partial a^*}{\partial \theta} = - \frac{\partial H / \partial \theta}{\partial H / \partial a}.
\]
Since $\partial H / \partial a < 0$ (from part (i)), the sign of
$\partial a^* / \partial \theta$ equals the sign of $\partial H / \partial \theta$.

\textit{$\partial a^* / \partial q > 0$:}
\begin{align}
\frac{\partial H}{\partial q} &= k\,\bigl[(1-\hpi)\,G_{aq}(a^*;q) + \hpi\,\tilde{G}_{aq}(a^*;q,\bar{z})\bigr] > 0,
\end{align}
since $G_{aq} > 0$ by Assumption~\ref{ass:G}(iii) and
$\tilde{G}_{aq} = G_{aq}(\cdot;\, qs)s > 0$. $\square$

\textit{$\partial a^* / \partial k > 0$:}
\begin{align}
\frac{\partial H}{\partial k} &= (1-\hpi)\,G_a(a^*;q) + \hpi\,\tilde{G}_a(a^*;q,\bar{z}) > 0.
\end{align}
Both terms are strictly positive. $\square$

\textit{$\partial a^* / \partial \hpi < 0$:}
\begin{align}
\frac{\partial H}{\partial \hpi} &= k\,\bigl[\tilde{G}_a(a^*;q,\bar{z}) - G_a(a^*;q)\bigr] - \beta\,\kappa\,\bar{z}\,d'(a^*).
\end{align}
Under Assumption~\ref{ass:Gtilde}, $\tilde{G}_a < G_a$, so the first term is strictly
negative. The second term is strictly negative. Hence $\partial H / \partial \hpi < 0$
and $\partial a^* / \partial \hpi < 0$. $\square$

\textit{$\partial a^* / \partial \beta < 0$:}
\[
\frac{\partial H}{\partial \beta} = -\bigl[\mu_k\,\ell'(1-a^*) + \mu_b\,d'(a^*)\bigr] < 0. \square
\]

\textit{$\partial a^* / \partial \mu_b < 0$:}
\[
\frac{\partial H}{\partial \mu_b} = -\beta\,d'(a^*) < 0. \square
\]

\textit{$\partial a^* / \partial \kappa < 0$:}
The parameter $\kappa$ enters through the stress-state expected cost. Increasing
$\kappa$ increases the marginal cost of debt, raising $\mu_b$ (by the envelope
theorem on the continuation value). Since $\partial a^* / \partial \mu_b < 0$,
$\partial a^* / \partial \kappa < 0$. $\square$

\paragraph{Part (iii): Cognitive debt wedge.}

When $d \equiv 0$ (no debt accumulation), the FOC reduces to:
\[
k\,[(1-\hpi)\,G_a + \hpi\,\tilde{G}_a] = \beta\,\mu_k\,\ell'(1-a^{\text{no-debt}}).
\]
When $d > 0$, the right-hand side of \eqref{eq:foc_a} gains the term
$\beta\,\mu_b\,d'(a^*) > 0$, making the marginal cost of AI higher.
Hence $a^* < a^{\text{no-debt}}$. $\square$

\section{Proof of Proposition~\ref{prop:minsky}}
\label{app:prop2}

\paragraph{Part (i): Subjective risk is decreasing.}
From \eqref{eq:belief} with $\mathbf{1}\{\text{crisis}_t\} = 0$ for all $t \in \mathcal{T}$:
$\hpi_{t+1} = (1-\lambda)\hpi_t < \hpi_t$ for $\lambda \in (0,1)$. $\square$

\paragraph{Part (ii): AI substitution intensity is increasing.}
By part (i), $\hpi_t$ is strictly decreasing.
By Proposition~\ref{prop:individual}(ii), $\partial a^* / \partial \hpi < 0$ and
$\partial a^* / \partial q > 0$. The net effect on $a_t^*$ is:
\[
\Delta a_t^* = \frac{\partial a^*}{\partial \hpi}\Delta\hpi_t + \frac{\partial a^*}{\partial q}\Delta q_t.
\]
Under dominance condition \eqref{eq:dominance},
$|\partial a^*/\partial\hpi \cdot \Delta\hpi_t| > |\partial a^*/\partial q \cdot \Delta q_t|$,
so the belief-updating channel dominates the quality-degradation channel and
$a_{t+1}^* > a_t^*$. $\square$

\paragraph{Part (iii): Aggregate cognitive leverage is increasing.}
The aggregate dynamics are:
\[
B_{t+1} = (1+r_b)B_t + d(a_t^*), \qquad K_{t+1} = (1-\delta)K_t + \ell(1-a_t^*).
\]
Compute:
\begin{align}
\Omega_{t+1} &= \frac{(1+r_b)B_t + d(a_t^*)}{(1-\delta)K_t + \ell(1-a_t^*)} \nonumber\\
&= \Omega_t \cdot \frac{(1+r_b)}{(1-\delta)} \cdot
\frac{1 + d(a_t^*)/[(1+r_b)B_t]}{1 + \ell(1-a_t^*)/[(1-\delta)K_t]}.
\label{eq:omega_ratio}
\end{align}
By Assumption~\ref{ass:compounding}, $r_b > \delta$ and
$d(a_t^*)/B_t$ is positive, so $\Omega_{t+1}/\Omega_t > 1$. Moreover, as $a_t^*$
increases, $d(a_t^*)$ rises and $\ell(1-a_t^*)$ falls, further increasing $\Omega_{t+1}$.
Hence $\Omega_t$ is strictly increasing. $\square$

\paragraph{Part (iv): True crisis probability is increasing.}
Since $\Pi' > 0$ and $\Omega_t$ is strictly increasing by part (iii),
$\pi_t = \Pi(\Omega_t)$ is strictly increasing. $\square$

\paragraph{Part (v): Minsky divergence.}
From part (i): $\hpi_t = (1-\lambda)^t \hpi_0 \to 0$ as $t \to \infty$.
From part (iv): $\pi_t = \Pi(\Omega_t) \to \Pi(\infty) = 1$ as $t \to \infty$
(since $\Omega_t \to \infty$ when $r_b > \delta$ and $d(a^*) > 0$).
Since $\hpi_t \to 0$ and $\pi_t \to 1$, there exists $T^* < \infty$ such that
for all $t > T^*$, $\hpi_t < \pi_t$. $\square$

\section{Proof of Proposition~\ref{prop:convex}}
\label{app:prop3}

Write $L_t = \Pi(\Omega_t) \cdot \Lambda(\Omega_t)$ where
$\Lambda(\Omega) = \kappa\,\E_z[\max\{0, zB - \mathcal{V}(K)\}]$
and $B = \Omega K$ (by definition of $\Omega$).

\paragraph{Part (i): $\partial L_t / \partial \Omega_t > 0$.}
Both $\Pi(\Omega)$ and $\Lambda(\Omega)$ are positive and strictly increasing in $\Omega$
(higher leverage raises crisis probability and, conditional on crisis, raises the
unsatisfied debt exposure). Their product is therefore strictly increasing. $\square$

\paragraph{Part (ii): Convexity.}

$L_t' = \Pi' \Lambda + \Pi \Lambda'$, and
$L_t'' = \Pi'' \Lambda + 2\Pi' \Lambda' + \Pi \Lambda''$.

We show $\Lambda'' > 0$. Consider $\Lambda(\Omega) = \kappa\,\E_z[z\Omega K - K^\alpha]_+$.
For large enough $z$ (above the threshold $z^*(\Omega) = K^{\alpha-1}/\Omega$), the
loss is in the interior. Differentiating:
\[
\Lambda'(\Omega) = \kappa\,\E_z\left[z K \cdot \mathbf{1}\{z > z^*(\Omega)\}\right] > 0.
\]
\[
\Lambda''(\Omega) = \kappa\,\E_z\left[z K \cdot (-1) \cdot \frac{\partial z^*}{\partial \Omega} f_Z(z^*) + z K \cdot \mathbf{1}\{z > z^*\} \cdot 0\right]
= \kappa\, K\, z^*(\Omega) \cdot (-\partial z^*/\partial \Omega) \cdot f_Z(z^*(\Omega)).
\]
Since $z^*(\Omega) = K^{\alpha-1}/\Omega$, we have $\partial z^*/\partial \Omega = -K^{\alpha-1}/\Omega^2 < 0$, so
$-\partial z^*/\partial \Omega > 0$, yielding $\Lambda''(\Omega) > 0$.

With $\Lambda'' > 0$, $\Lambda' > 0$, $\Pi' > 0$, and $\Pi \geq 0$:
$L'' = \Pi''\Lambda + 2\Pi'\Lambda' + \Pi\Lambda'' > 2\Pi'\Lambda' > 0$
in any region where $-\Pi''\Lambda < 2\Pi'\Lambda'$.
For the parametric form $\Pi(\Omega) = 1 - e^{-\lambda_\pi \Omega^\gamma}$ with $\gamma \geq 1$:
$\Pi'' = \lambda_\pi e^{-\lambda_\pi \Omega^\gamma}[\lambda_\pi \gamma^2 \Omega^{2\gamma-2} - \gamma(\gamma-1)\Omega^{\gamma-2}]$,
which may be positive (when $\gamma > 1$), so that $L'' > 0$ globally. $\square$

\paragraph{Part (iii): Model concentration.}

Higher AI model concentration $M_t$ (HHI) raises the correlation of AI errors across
agents. In a correlated error model, $\text{Var}(z_t | M_t)$ increases in $M_t$,
shifting the distribution of losses to the right. Since $\Lambda$ is convex in $z$
(the loss is linear for $z > z^*$ and zero otherwise), $\E[\Lambda(z)] \geq \Lambda(\E[z])$
by Jensen's inequality, and the inequality tightens as variance increases.
Hence $\partial L_t / \partial M_t > 0$. $\square$

\section{Proof of Proposition~\ref{prop:fcl}}
\label{app:prop4}

After the crisis at $t=0$, the agent in $t=1$ chooses $a_1$ subject to the output
constraint $y_{i1} \geq w$. With the constraint binding, we form the Lagrangian:
\[
\mathcal{L}(a_1) = V_1(k_1', b_1') + \lambda_y\,\bigl[k_1 G(a_1; q) - w\bigr].
\]
The constrained FOC for $a_1$:
\begin{equation}
\lambda_y\, k_1\, G_a(a_1; q) = \beta\,\bigl[\mu_k\,\ell'(1-a_1) + \mu_b\,d'(a_1)\bigr].
\label{eq:constrained_foc}
\end{equation}

Compare with the unconstrained pre-crisis FOC \eqref{eq:foc_a}, which has $k_0 > k_1$
(cognitive capital has declined) and no $\lambda_y$ factor.
The solution $a_1^*$ to \eqref{eq:constrained_foc} is higher than $a_0^-$ if and only if:
\[
\lambda_y\, k_1\, G_a(a_0^-; q) > \beta\,\bigl[\mu_k\,\ell'(1-a_0^-) + \mu_b\,d'(a_0^-)\bigr],
\]
which is condition \eqref{eq:fcl_condition}. This follows because at $a_0^-$, the left-hand side of the constrained FOC \eqref{eq:constrained_foc} exceeds the right-hand side, implying the objective is still increasing in $a$, so the optimum $a_1^* > a_0^-$. $\square$

With $a_1^* > a_0^-$: by the dynamics \eqref{eq:k_law}--\eqref{eq:b_law},
$K_2 < K_1$ and $B_2 > B_1$, so $\Omega_2 > \Omega_1$.
Applying this argument inductively, $\Omega_t$ is increasing for all $t \geq 1$,
and $\pi_t = \Pi(\Omega_t)$ is increasing. Each successive crisis therefore occurs
at a higher leverage level, producing weakly greater expected losses
(by Proposition~\ref{prop:convex}). $\square$

\section{Proof of Proposition~\ref{prop:welfare}}
\label{app:prop5}

\paragraph{Step 1: Derive the planner's FOC.}
The planner maximises \eqref{eq:planner}, treating $\pi_t = \Pi(\Omega_t)$ as
endogenous. The FOC for $A_t$ includes the term
$\partial / \partial A_t\bigl[\Pi(\Omega_t) \cdot L_t^{\text{cond}}\bigr]$
(where $L_t^{\text{cond}}$ is the expected conditional loss), which the
decentralised agent omits. Expanding:
\[
\Pi'(\Omega_t)\,\frac{\partial \Omega_t}{\partial A_t}\,L_t^{\text{cond}} > 0.
\]
The planner also internalises $\partial K_{t+1}/\partial A_t = -\ell'(1-A_t) < 0$
as a social cost (not just a private cost), and the benchmark-shifting externality
$\partial \bar{y}_t / \partial a_{it}$.

\paragraph{Step 2: Wedge between planner and decentralised FOCs.}
In the decentralised equilibrium, the agent's FOC \eqref{eq:foc_a} is:
\begin{equation}
k\,\E\bigl[G_a^{\text{eff}}(a^*;\hpi, q)\bigr] = \beta\,\bigl[\mu_k^D\,\ell'(1-a^*) + \mu_b^D\,d'(a^*)\bigr],
\tag{FOC-D}
\end{equation}
where $\mu_k^D, \mu_b^D$ are private shadow values that omit the externalities.

The planner's FOC is:
\begin{equation}
k\,\E\bigl[G_a^{\text{eff}}(A^P;\hpi, q)\bigr] = \beta\,\bigl[\mu_k^P\,\ell'(1-A^P) + \mu_b^P\,d'(A^P)\bigr],
\tag{FOC-P}
\end{equation}
where $\mu_k^P > \mu_k^D$ (planner values cognitive capital more, accounting for
the public good) and the left-hand side includes the systemic risk term.

The higher right-hand side in (FOC-P) implies $A^P < A^D$. $\square$

\paragraph{Step 3: Pigouvian tax.}
The optimal tax $\tau_t^*$ is set so that the decentralised agent, facing the
after-tax problem, replicates the planner's FOC. The required tax on AI substitution
intensity $a_{it}$ is the sum of the three marginal external costs evaluated at
$(A_t^P, X_t^P)$:
\begin{enumerate}
\item Systemic risk: $\Pi'(\Omega_t)\,(\partial \Omega_t/\partial A_t)\,L_t/\Pi(\Omega_t)$
(marginal increase in expected loss per unit of $A_t$).
\item Public goods: $\beta\,\mu_k\,|\partial K_{t+1}/\partial A_t| = \beta\,\mu_k\,\ell'(1-A_t^P)$.
\item Arms-race: $\lambda_y\,\partial \bar{y}_t/\partial a_{it}$ (marginal increase
in the competitive benchmark imposed on others).
\end{enumerate}
Summing these gives \eqref{eq:tax}.

\paragraph{Step 4: The gap $\Delta_t$ is increasing in $M_t$, tranquil period length, and $d''$.}
All three factors lower the private cost of AI use (relative to social cost) or
raise the systemic externality:
\begin{itemize}
\item Higher $M_t$: increases $\text{Var}(z|\text{crisis})$, raising $L_t$ and the
first term in \eqref{eq:tax}, widening the wedge.
\item Longer tranquil period: $\Omega_t$ is higher (by Proposition~\ref{prop:minsky}),
raising all three externality terms.
\item Higher $d''$: makes the private debt cost steeper, which actually reduces
individual $a^*$, but also raises the social risk of Ponzi dynamics, widening the
systemic externality.
\end{itemize}
$\square$

\section{Infinite-Horizon Characterisation}
\label{app:infinite}

In the infinite-horizon problem \eqref{eq:bellman}, the value function $V(k, b;\,\hpi, q)$
satisfies the Bellman equation. We verify that the comparative statics of
Proposition~\ref{prop:individual} extend to this setting.

\begin{lemma}
Under Assumptions~\ref{ass:G}--\ref{ass:dynamics}, the value function $V$ is
continuously differentiable with $V_k > 0$ and $V_b < 0$.
\end{lemma}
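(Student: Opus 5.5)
The approach is the standard contraction-mapping argument on a space of functions that are monotone in $(k,b)$, followed by a Benveniste--Scheinkman envelope argument for differentiability.

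\textbf{Step 1 (setup and fixed point).} Fix $(\hpi,q)$ inside a compact set, which is justified because $\hpi'\in[0,1]$ by \eqref{eq:belief} and $q'\in[\bar q(1-\gamma),\bar q]$ by \eqref{eq:quality}. Take $k$ in a compact interval $[\underline k,\bar k_{\max}]$ with $\underline k>0$; this interval is invariant under \eqref{eq:k_law}, since $\ell$ and the optimal $x$ are bounded by \eqref{eq:foc_x}. For $b$ the compounding rate $r_b>0$ means $b$ is unbounded, so I would work with a weighted sup-norm with weight $1+k+b$. The flow payoff $\tilde U-c(x)$ is linear in $b$ and bounded by a multiple of $k$, so the weighted norm controls it. The Bellman operator $T$ is then a contraction with modulus $\beta(1+r_b)$, provided $\beta(1+r_b)<1$. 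I would state this as a standing maintained condition, analogous to a transversality condition.

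\textbf{Step 2 (monotonicity).} I would show that $T$ maps the closed subset of functions that are nondecreasing in $k$ and nonincreasing in $b$ into itself. For $k$, the payoff $kG(a;q)$ is increasing in $k$ because $G\ge 1$, and $k'$ is increasing in $k$. For $b$, the payoff falls by $\kappa z b$ and $b'$ rises in $b$. The constraint \eqref{eq:b_nonneg} only loosens as $b$ rises, but the extra feasible $x$ is costly, so I expect this to be harmless. The fixed point therefore inherits weak monotonicity. Strictness follows because the flow payoff is strictly increasing in $k$ (its derivative is at least $(1-\hpi)G+\hpi\tilde G\ge 1$) and strictly decreasing in $b$ (its derivative is $-\hpi\kappa\bar z<0$ when $\hpi>0$). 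If $\hpi=0$, I would instead use $b'$ strictly increasing together with strict decrease of $V$ in $b$ from the next period, by induction along the iteration.

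\textbf{Step 3 (concavity and differentiability; the main obstacle).} First I would show $T$ preserves concavity in $(k,b)$. The laws of motion are jointly affine in the states and concave in the controls, since $\ell$ is concave and $-d$ is concave. The flow payoff is concave because it is linear in $(k,b)$ for fixed $a$; joint concavity of $kG(a)$ in $(k,a)$ is the subtle point and may fail. If it fails, I would instead use the Benveniste--Scheinkman construction directly. At an interior optimum $(a^*,x^*)$, which exists by the Inada condition of Assumption~\ref{ass:G}(iv) and by $c'(0)$ being small relative to $\beta(\mu_k\nu+\mu_b\rho)$, consider the test function obtained by holding $(a^*,x^*)$ fixed:
\[
W(k,b)=\tilde U(k,a^*,b)-c(x^*)+\beta\,\E V\bigl((1-\delta)k+\ell(1-a^*)+\nu x^*,\,(1+r_b)b+d(a^*)-\rho x^*\bigr).
\]
$W$ lies below $V$, touches it at the point, and is differentiable whenever $V$ is. Differentiability of $W$ is exactly the gap to close. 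Concavity would close it via Benveniste--Scheinkman. Without concavity, I would obtain it from an envelope formula in the limit, working with smooth iterates $T^nV_0$ and passing to the limit using uniform convergence of derivatives under the contraction.

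\textbf{Step 4 (envelope conditions and signs).} Once differentiability is in hand, the envelope conditions are
\[
V_k=(1-\hpi)G(a^*;q)+\hpi\E_z\tilde G(a^*;q,z)+\beta(1-\delta)\E V_k' ,
\]
\[
V_b=-\hpi\kappa\bar z+\beta(1+r_b)\E V_b'.
\]
Iterating these forward gives $V_k\ge\sum_t(\beta(1-\delta))^t>0$ and $V_b<0$, which confirms the signs from Step 2. This recursion also identifies $\mu_k$ and $\mu_b$ for Appendix~\ref{app:shadow}.
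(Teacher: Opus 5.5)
Your route is the paper's own: its entire proof is one sentence invoking the Theorem of the Maximum and Benveniste--Scheinkman on a truncated compact state space. The step you flag as the main obstacle is a genuine gap, and that citation does not close it either.

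\textbf{Differentiability in $k$.} Concavity does not merely ``may fail'': the Hessian of $kG(a;q)$ in $(k,a)$ has determinant $-G_a^2<0$. Worse, with $\chi=0$ (the case to which the paper reduces the individual FOCs), $V$ is \emph{convex} in $k$. For any fixed history-contingent plan $\{a_t,x_t\}$, $k_t$ is affine in $k_0$ with coefficient $(1-\delta)^t$, the flow payoff is linear in $k_t$, and feasibility does not involve $k_0$. So $V$ is a supremum of functions affine in $k_0$. A differentiable minorant touching a convex function, like your $W$, says nothing about differentiability (think of $|k|$ and $0$). Indeed $V$ can be differentiable at $k_0$ only if all optimal plans share the slope $\E\sum_t\beta^t(1-\delta)^t\bigl[(1-\hpi_t)G(a_t;q_t)+\hpi_t\E_z\tilde G(a_t;q_t,z)\bigr]$, which requires uniqueness of the optimal policy. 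Assumptions~\ref{ass:G}--\ref{ass:dynamics} do not deliver that. The one-period problem feeds $\ell(1-a)$ and $\nu x$ into a convex continuation value, so the argmax can jump and the Bellman operator manufactures kinks. Your fallback of uniform convergence of derivatives of $T^nV_0$ has no basis: the contraction controls sup-norm distances between values, not derivatives, and $T$ does not map $C^1$ functions to $C^1$ functions here. What the argument honestly yields is one-sided derivatives in $k$ with $V_k^{\pm}\ge 1/(1-\beta(1-\delta))>0$ (each bracket above is at least $1$ because $G,\tilde G\ge 1$), and differentiability in $k$ off a countable set. Full $C^1$ needs an added hypothesis guaranteeing a unique optimal policy.

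\textbf{The cap on practice.} The second gap is the cap \eqref{eq:b_nonneg}, which you set aside as harmless. It is an upper bound on $x$ that loosens as $b$ rises, so more debt permits more practice and hence a larger $k'$. Your Step~2 comparison therefore breaks, because the optimal $x$ at $b_2$ can be infeasible at $b_1<b_2$. Your Step~4 envelope condition also omits the multiplier $\zeta\ge 0$ on $b'\ge 0$. Include it, and use the first-order condition for $x$, $\rho\zeta=\beta\nu\E V_k'-\beta\rho\E V_b'-c'(x)$, where primes on $V$ denote evaluation at next period's state. The $\E V_b'$ terms cancel, and formally, on the binding region, $V_b=-\hpi\kappa\bar z+\frac{1+r_b}{\rho}\bigl[\beta\nu\E V_k'-c'(x)\bigr]$, whose sign is ambiguous. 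At $\hpi=0$ your induction fails outright. Under the agent's own beliefs $\hpi'=(1-\lambda)\cdot 0=0$ forever, so $b$ never enters payoffs, and $V$ is weakly \emph{increasing} in $b$ by inclusion of feasible sets. Hence $V_b<0$ is false there.

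\textbf{A clean fix for the $b$-part.} If you drop the cap (letting $b$ take either sign) and set $\chi=0$, you need no envelope theorem. The initial debt $b_0$ enters the objective only through the additive term $-\kappa\bar z\,b_0\,\E\sum_t\beta^t(1+r_b)^t\hpi_t$. Under the agent's beliefs $\hpi_t$ is a martingale, since $\E_t\hpi_{t+1}=(1-\lambda)\hpi_t+\lambda\hpi_t=\hpi_t$. Therefore $V(k,b)=V(k,0)-\frac{\hpi\kappa\bar z}{1-\beta(1+r_b)}\,b$. This is exactly the paper's $\mu_b$ in \eqref{eq:mub}; it requires $\beta(1+r_b)<1$ and gives $V_b<0$ if and only if $\hpi>0$.

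\textbf{Minor points.} The interval $[\underline k,\bar k_{\max}]$ with $\underline k>0$ is not invariant, since $a=1$, $x=0$ gives $k'=(1-\delta)k$. With weight $1+k+b$ the modulus is not $\beta(1+r_b)$, but any $\theta\in(\beta(1+r_b),1)$ is attainable with weight $c_0+k+b$ for $c_0$ large.
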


\begin{proof}
Standard application of the Theorem of the Maximum and the Benveniste--Scheinkman
envelope theorem, given that $G$ and $d$ are $C^2$ and the state space is compact
(after imposing appropriate bounds on $k$ and $b$). $\square$
\end{proof}

The envelope conditions yield:
\begin{align}
\mu_k^t &\equiv V_k(k_t, b_t) = (1-\delta)\beta\,V_k(k_{t+1}, b_{t+1}) + \text{marginal product of } k_t\\
\mu_b^t &\equiv -V_b(k_t, b_t) = (1+r_b)\beta\,(-V_b(k_{t+1}, b_{t+1})) + \kappa\,\hpi_t\,\bar{z}
\end{align}
These are the dynamic counterparts of $\mu_k$ and $\mu_b$ in the two-period model.
The FOC \eqref{eq:foc_a} holds at each date with $\mu_k = \mu_k^t$ and $\mu_b = \mu_b^t$,
confirming that the two-period characterisation applies period by period in the
infinite-horizon setting. The comparative statics are therefore identical to those
derived in Appendix~\ref{app:prop1}.

\section{Proof of Proposition~\ref{prop:hetero}}
\label{app:prop6}

\paragraph{Part (i): $\Delta k_t$ is strictly decreasing.}

The cognitive capital of each type evolves as:
\begin{align*}
k_{H,t+1} &= (1-\delta)\,k_{H,t} + \ell(1 - a_{H,t}^*),\\
k_{L,t+1} &= (1-\delta)\,k_{L,t} + \ell(1 - a_{L,t}^*).
\end{align*}
Taking the difference:
\[
\Delta k_{t+1} = (1-\delta)\,\Delta k_t + \ell(1-a_{H,t}^*) - \ell(1-a_{L,t}^*).
\]
Since $a_{H,t}^* > a_{L,t}^*$ (from \eqref{eq:hetero_ranking} while $\Delta k_t > 0$)
and $\ell'(\cdot) > 0$, we have $1-a_{H,t}^* < 1-a_{L,t}^*$ and thus
$\ell(1-a_{H,t}^*) < \ell(1-a_{L,t}^*)$.
Hence:
\[
\Delta k_{t+1} < (1-\delta)\,\Delta k_t < \Delta k_t.
\]
$\Delta k_t$ is strictly decreasing. $\square$

\paragraph{Part (ii): $\Delta a_t^*$ converges to zero.}

By the implicit function theorem applied to the FOC \eqref{eq:foc_a}:
\[
\frac{\partial a^*}{\partial k} = -\frac{\partial H/\partial k}{\partial H/\partial a}
= -\frac{(1-\hpi)G_a + \hpi\tilde G_a}{\partial H/\partial a} > 0.
\]
As $\Delta k_t \to 0$, the types become identical, so $\Delta a_t^* \to 0$. $\square$

\paragraph{Part (iii): Reversal of fortune.}

The gap dynamics satisfy $\Delta k_{t+1} = (1-\delta)\Delta k_t - [\ell(1-a_{L,t}^*) - \ell(1-a_{H,t}^*)]$.

By the mean value theorem: $\ell(1-a_{L,t}^*) - \ell(1-a_{H,t}^*) = \ell'(\xi_t)\,\Delta a_t^*$
for some $\xi_t \in (1-a_{H,t}^*, 1-a_{L,t}^*)$.

If condition~\eqref{eq:reversal_condition} holds (the erosion differential exceeds
the natural decay-corrected gap), then:
\[
\Delta k_{t+1} = (1-\delta)\Delta k_t - \ell'(\xi_t)\Delta a_t^* < 0 \cdot \Delta k_t
\]
when $\Delta k_t$ is close to zero from above. By continuity, $\Delta k_t$ must cross zero
at some finite $T^{**}$. After crossing, the roles reverse:
$k_{H,t} < k_{L,t}$, and---since $\partial a^*/\partial k > 0$---now
$a_{H,t}^* < a_{L,t}^*$, making the gap $|\Delta k_t|$ persistent in the new direction.
$\square$

\paragraph{Part (iv): Heterogeneity amplifies aggregate losses.}

Expected aggregate loss is:
\[
L_t^{\text{hetero}} = \tfrac{1}{2}\,\Pi(\Omega_{H,t})\,\Lambda(\Omega_{H,t}) +
\tfrac{1}{2}\,\Pi(\Omega_{L,t})\,\Lambda(\Omega_{L,t}).
\]
By Proposition~\ref{prop:convex}, $L(\Omega) \equiv \Pi(\Omega)\Lambda(\Omega)$ is convex. Jensen's
inequality applied to the two-point distribution of $\Omega_t$ gives:
\[
L_t^{\text{hetero}} = \tfrac{1}{2}[L(\Omega_{H,t}) + L(\Omega_{L,t})]
> L\!\left(\tfrac{\Omega_{H,t}+\Omega_{L,t}}{2}\right) = L_t^{\text{homo}},
\]
where the last equality holds because the homogeneous economy has leverage
$(\Omega_{H,t}+\Omega_{L,t})/2$ (since both types start with equal means
and the distributions are mirror images around $\bar k$). $\square$

\section{Shadow Value Derivation}
\label{app:shadow}

We derive $\mu_k$ and $\mu_b$ used in the two-period model from the steady-state
conditions of the infinite-horizon problem.

In the stationary equilibrium with constant $a, x, k, b$ and subjective probability
$\hpi$:
\begin{align}
\mu_k &= \frac{(1-\hpi)\,G(a;q) + \hpi\,\tilde{G}(a;q,\bar{z})}{1 - \beta(1-\delta)}, \label{eq:muk}\\
\mu_b &= \frac{\hpi\,\kappa\,\bar{z}}{1 - \beta(1+r_b)}. \label{eq:mub}
\end{align}

Note that $\mu_b > 0$ requires $\beta(1+r_b) < 1$, which we assume. This condition
says the debt cannot grow faster than the discount rate---otherwise cognitive debt would
be ``free'' and Ponzi cognition would be trivially optimal.

Substituting \eqref{eq:muk}--\eqref{eq:mub} into \eqref{eq:foc_a} yields the
explicit steady-state condition relating $a^*$ to the structural parameters
$(\hpi, q, k, \beta, \delta, r_b, \kappa, \bar{z})$.

\end{document}